\documentclass{article} 
\usepackage{iclr2026_conference,times}

\usepackage{amsmath,amsfonts,bm}

\def\eqref#1{equation~\ref{#1}}

\def\1{\bm{1}}

\DeclareMathAlphabet{\mathsfit}{\encodingdefault}{\sfdefault}{m}{sl}
\SetMathAlphabet{\mathsfit}{bold}{\encodingdefault}{\sfdefault}{bx}{n}

\usepackage{subcaption}  
\usepackage{hyperref}
\usepackage{url}
\usepackage{algpseudocode}
\usepackage{algorithm}
\usepackage{amsthm}  
\newtheorem{proposition}{Proposition}
\usepackage{subcaption}  
\usepackage{wrapfig}      
\usepackage{tcolorbox}    
\tcbuselibrary{listings}  
\usepackage{amsmath}      

\usepackage{graphicx}
\usepackage{colortbl}
\usepackage{graphicx}
\usepackage{booktabs}
\usepackage{multirow}
\usepackage{wrapfig}

\title{LaplacianFormer:Rethinking Linear Attention with Laplacian Kernel}

\author{
Zhe Feng\textsuperscript{2,1} \thanks{Equal contribution.}\quad
Sen Lian\textsuperscript{3} \footnotemark[1]\quad
Changwei Wang\textsuperscript{5,6}\footnotemark[2]\enspace
Muyang Zhang\textsuperscript{2,1}\enspace
Tianlong Tan\textsuperscript{4}\enspace
Rongtao Xu\textsuperscript{7}\enspace
\\
\bfseries 
Weiliang Meng\textsuperscript{1,2}\footnotemark[2]\enspace
Xiaopeng Zhang\textsuperscript{1,2} \\
\textsuperscript{1}MAIS,Institute of Automation, Chinese Academy of Sciences\\
\textsuperscript{2}School of Artificial Intelligence, University of Chinese Academy of Sciences\\usr 
\textsuperscript{3}China Electronics Data Corporation\\
\textsuperscript{4}Institute of Computing Technology, Chinese Academy of Sciences \\
\textsuperscript{5}The Key Laboratory of Computing Power Network and Information Security, \\Ministry of Education, Shandong Computer Science Center, Qilu University of Technology\\
\textsuperscript{6}Shandong Provincial Key Laboratory of Computing Power Internet and Service Computing, \\Shandong Fundamental Research Center for Computer Science\\
\textsuperscript{7} Spatialtemporal AI\\
\texttt{changweiwang@sdas.org, weiliang.meng@ia.ac.cn}
}

\iclrfinalcopy 
\begin{document}

\maketitle
\renewcommand{\thefootnote}{\fnsymbol{footnote}}
\footnotetext[2]{Corresponding author.}
\renewcommand*{\thefootnote}


\begin{abstract}
The quadratic complexity of softmax attention presents a major obstacle for scaling Transformers to high-resolution vision tasks. Existing linear attention variants often replace the softmax with Gaussian kernels to reduce complexity, but such approximations lack theoretical grounding and tend to oversuppress mid-range token interactions. We propose LaplacianFormer, a Transformer variant that employs a Laplacian kernel as a principled alternative to softmax, motivated by empirical observations and theoretical analysis. To address expressiveness degradation under low-rank approximations, we introduce a provably injective feature map that retains fine-grained token information. For efficient computation, we adopt a Nyström approximation of the kernel matrix and solve the resulting system using Newton--Schulz iteration, avoiding costly matrix inversion and SVD. We further develop custom CUDA implementations for both the kernel and solver, enabling high-throughput forward and backward passes suitable for edge deployment. Experiments on ImageNet show that LaplacianFormer achieves strong performance-efficiency trade-offs while improving attention expressiveness. Code is available at the following site: \href{https://mike7472727.github.io/laplacianformer.github.io/}{\textcolor{black}{LaplacianFormer }}.

\end{abstract}
\section{Introduction}

The Transformer architecture~\cite{Vaswani2017AttentionIA} has become a fundamental framework for sequence modeling, demonstrating strong performance across a wide range of computer vision tasks~\cite{Jiang2024RBSFormerET,Zhu2020DeformableDD,Yu2024EmbeddingFreeTW,Hou2024ProTransformerRT,Su2024ScanFormerRE}. While its self-attention mechanism effectively captures rich contextual dependencies, its quadratic time and space complexity with respect to sequence length significantly limits scalability to long input sequences~\cite{Keles2022OnTC,Hassani2024FasterNA}.

To address this, a number of linear attention variants have been proposed to approximate softmax attention using kernel-based formulations, thereby reducing complexity to linear~\cite{Katharopoulos2020TransformersAR,Lu2021SOFTST,Chen2021SkyformerRS,Bui2025RevisitingKA,Kashiwagi2021GaussianKS}. Notably, despite differences in implementation, the vast majority of these methods converge on a similar design choice: they rely on Gaussian-like kernels to define attention similarity. This widespread adoption appears to be more of a default convention than a theoretically grounded decision. Indeed, there is a lack of empirical or analytical justification for why the Gaussian kernel is inherently suitable for modeling query-key interactions in attention mechanisms.
\begin{figure}[ht]
\centering
\begin{subfigure}[b]{0.32\linewidth}
    \includegraphics[width=\linewidth]{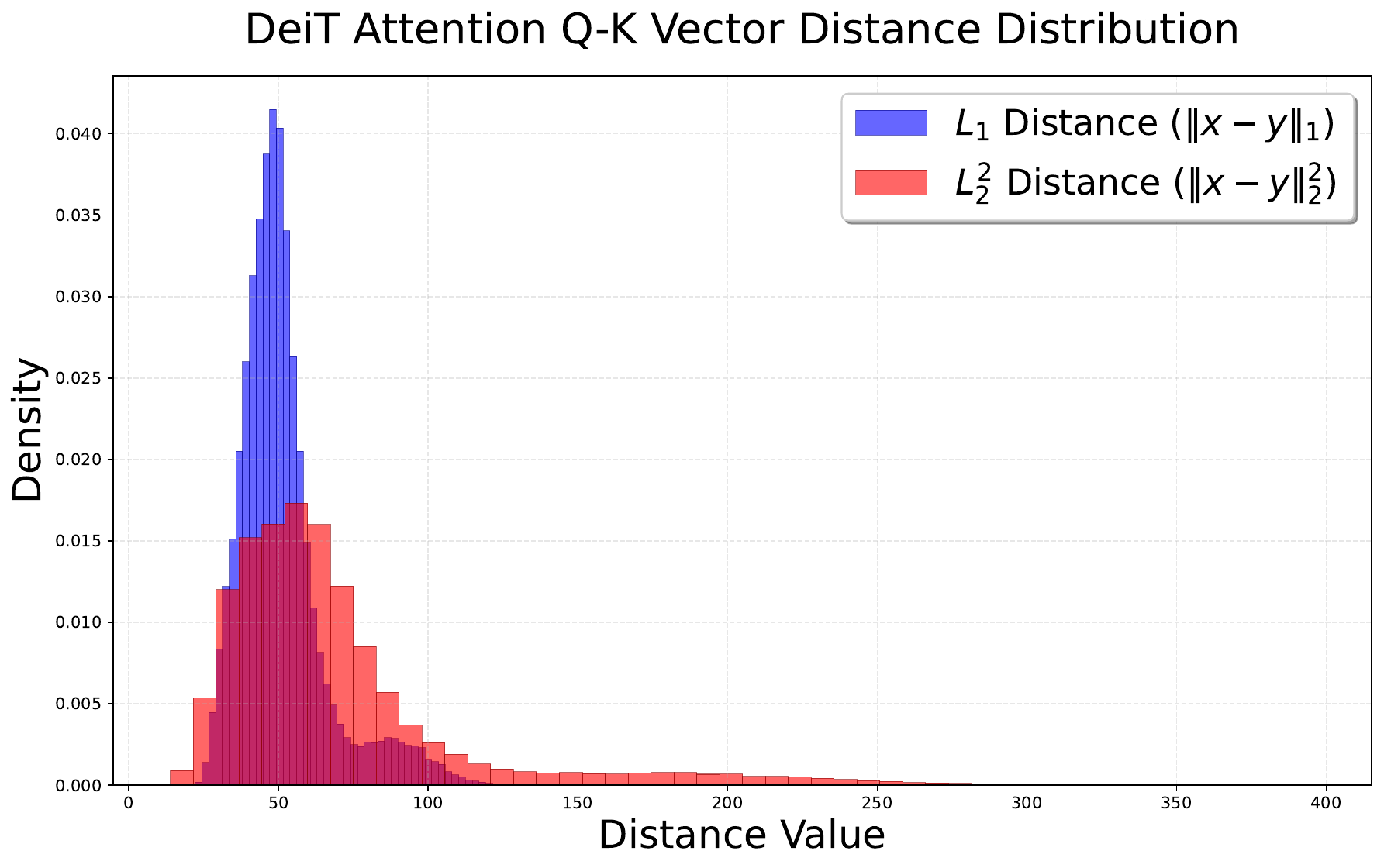}
    \caption{DeiT}
\end{subfigure}
\begin{subfigure}[b]{0.32\linewidth}
    \includegraphics[width=\linewidth]{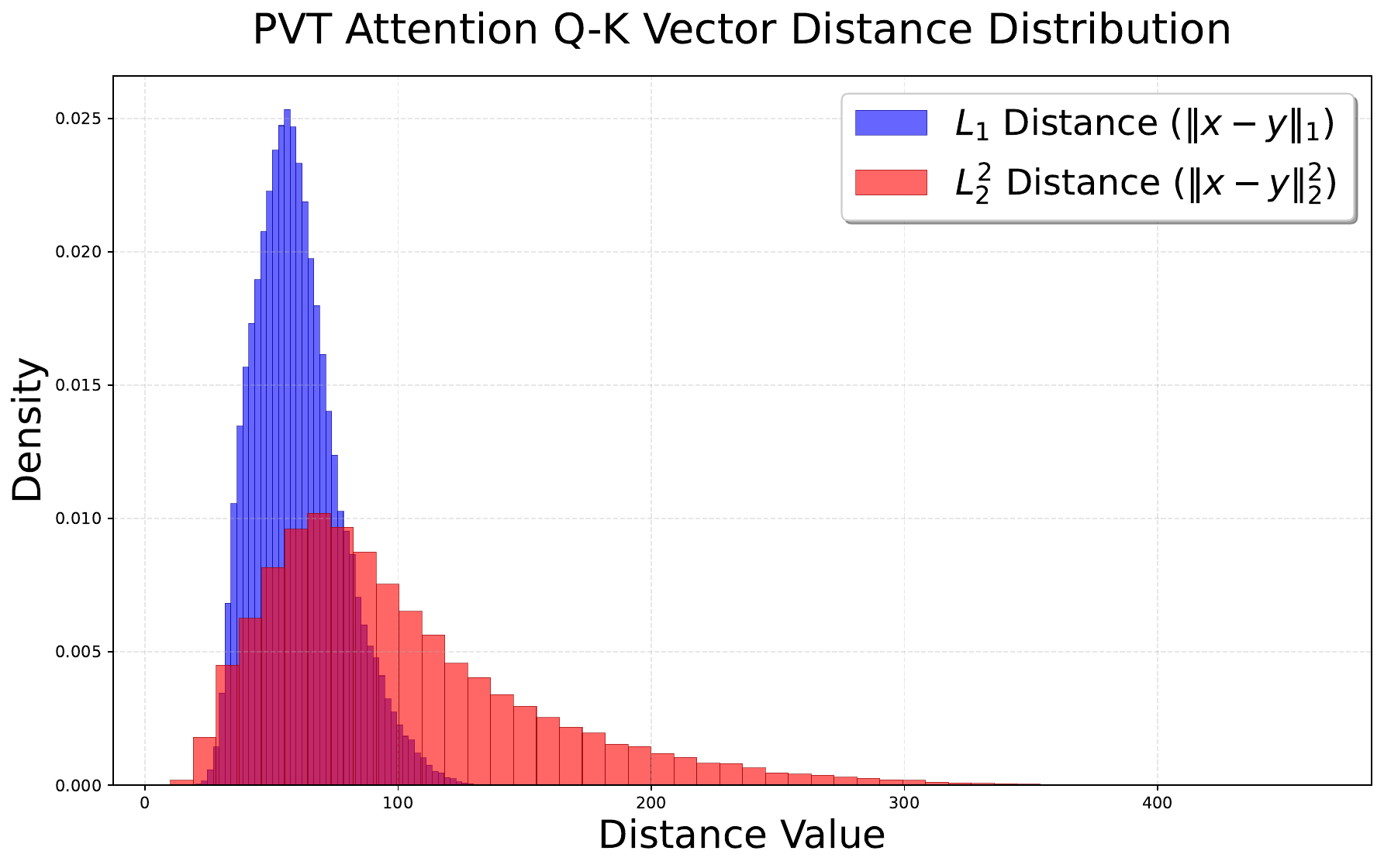}
    \caption{PVT}
\end{subfigure}
\begin{subfigure}[b]{0.32\linewidth}
    \includegraphics[width=\linewidth]{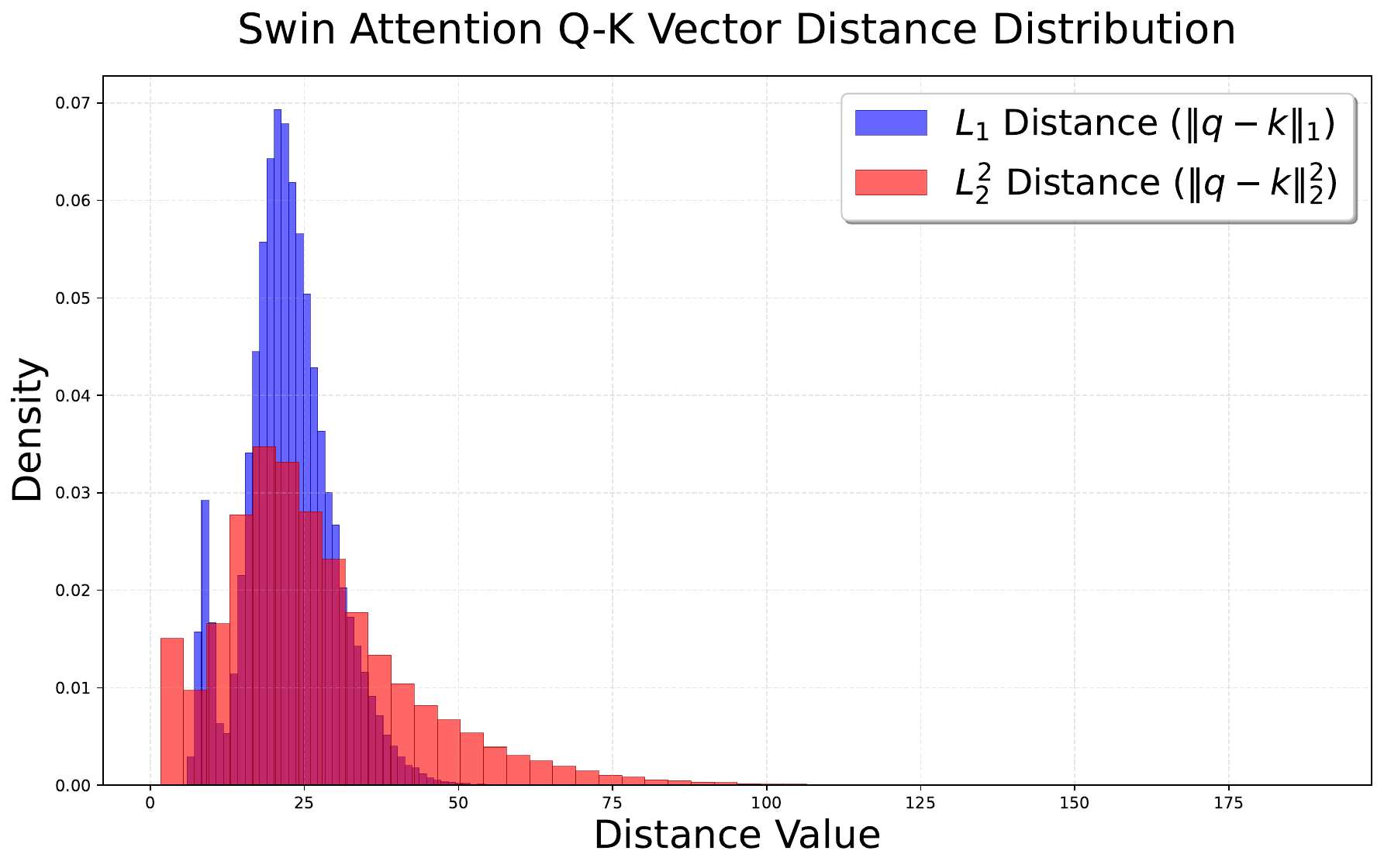}
    \caption{Swin}
\end{subfigure}
\caption{Distributions of $\ell_1$ and $\ell_2^2$ Q-K distances in DeiT, PVT, and Swin Transformers.}
\label{fig:qk_distributions}
\end{figure}

Theoretically, the Gaussian kernel presumes that query-key similarity should decay rapidly with increasing $\ell_2^2$ distance. However, this assumption may not reflect the actual distribution of query-key interactions in vision Transformers. To investigate this issue, we analyze the empirical distribution of query-key distances in DeiT~\cite{Touvron2020TrainingDI}, PVT~\cite{Wang2021PyramidVT}, and Swin~\cite{Liu2021SwinTH}, using official checkpoints on the ImageNet~\cite{Deng2009ImageNetAL} validation set. As shown in Figure~\ref{fig:qk_distributions}, the $\ell_2^2$ distances exhibit a heavy-tailed distribution with high variance and frequent outliers. When passed through the exponential function in the Gaussian kernel, these long-tailed distances will lead to an amplification of the tail effect: outliers dominate the attention map, while moderately relevant keys are overly suppressed. This behavior not only reduces the expressiveness of attention weights but also causes vanishing gradients and unstable optimization, especially during the early stages of training~\cite{Zhang2021DeepLL}. In contrast, $\ell_1$ distances tend to be more concentrated and less sensitive to outliers, providing a more faithful measure of token relevance. This observation motivates the use of the Laplacian kernel, defined as  
$
k(x, y) = \exp\left(-\frac{\|x - y\|_1}{\lambda}\right),
$ 
where $x, y \in \mathbb{R}^d$ are input feature vectors and $\lambda > 0$ is a decay parameter. Compared to the Gaussian kernel, which is based on squared $\ell_2$ distances, the Laplacian kernel exhibits a slower decay rate.
\begin{figure}[htbp] 
    \centering
    \includegraphics[width=0.75\linewidth]{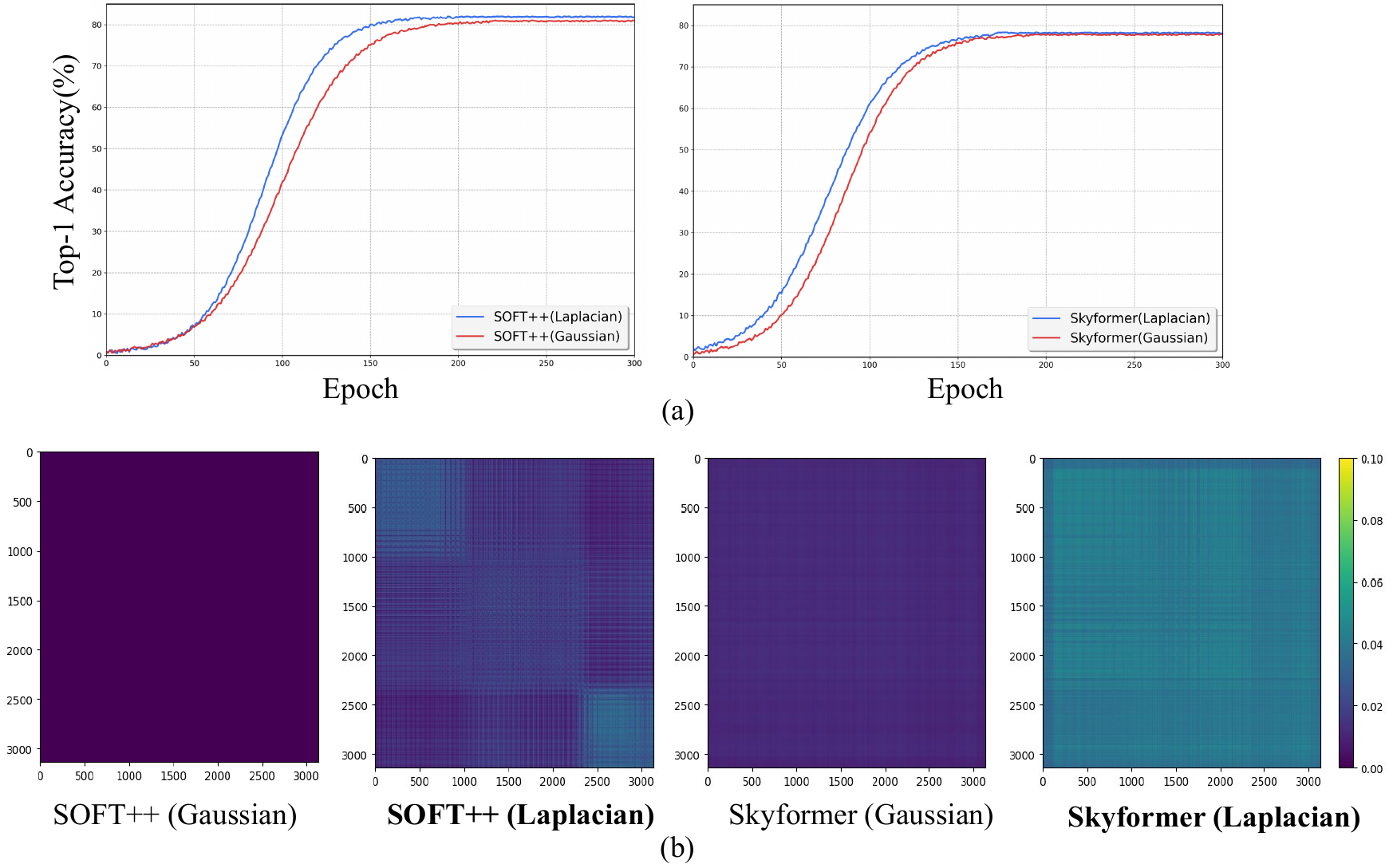}
    \caption{
    (a) Top-1 accuracy (\%) over training epochs on ImageNet. The left plot shows results for SOFT++, and the right plot for Skyformer, each using either a Gaussian or Laplacian kernel for attention computation. Models with Laplacian kernels (blue) converge faster and achieve slightly higher final accuracy compared to their Gaussian counterparts (red). 
    (b) Visual comparison of attention maps with different kernel choices. 
Each pair shows attention maps from the first Transformer block of SOFT++ and Skyformer, where we only replace the Gaussian kernel with a Laplacian kernel, keeping all other components unchanged. Attention matrices computed with Laplacian kernels exhibit more structured patterns and better-conditioned rank profiles.
  }
    \label{fig:example}
\end{figure}

Beyond empirical distributions, we further analyze the gradient behavior of these kernels, which plays a critical role in optimization stability. For the Laplacian kernel, the partial derivative with respect to coordinate $x_i$ is  
$
\frac{\partial k}{\partial x_i} = \frac{1}{\lambda} \cdot \mathrm{sign}(x_i - y_i) \cdot \exp\left(-\frac{\|x - y\|_1}{\lambda}\right),
$ 
while for the Gaussian kernel, it is  
$
\frac{\partial k}{\partial x_i} = \frac{1}{\sigma^2} (x_i - y_i) \cdot \exp\left(-\frac{\|x - y\|_2^2}{2\sigma^2}\right),
$ 
where $\sigma$ denotes the kernel bandwidth.
Notably, the Laplacian kernel maintains non-vanishing gradients even when $x$ and $y$ are nearly identical, owing to the piecewise linear nature of the $\ell_1$ norm. In contrast, the Gaussian kernel’s gradients diminish linearly as $\|x - y\|_2 \to 0$, resulting in vanishing updates that may hinder convergence.  
To empirically validate this theoretical claim, we perform a simple ablation by replacing the Gaussian kernel in two representative models—SOFT++~\cite{Lu2024} and Skyformer~\cite{Chen2021SkyformerRS}—with a Laplacian kernel, keeping all other architectural components unchanged.  
As shown in Figure~\ref{fig:example}, this modification alone leads to significantly faster convergence in both models, supporting the hypothesis that the Laplacian kernel facilitates more stable and efficient learning dynamics.  
Beyond training behavior, we also compare attention maps produced by the two kernels. The same figure also visualizes attention from the first Transformer block of both models. In SOFT++, the Gaussian kernel yields overly sparse attention, while the Laplacian variant produces more expressive and coherent patterns. A similar trend is also observed in Skyformer.  
These results suggest that, beyond its theoretically favorable decay profile, the Laplacian kernel also improves the practical expressiveness of attention maps, particularly in early to mid-stage layers.

Motivated by these findings, we introduce \textbf{LaplacianFormer}, a scalable linear attention framework that replaces the Gaussian-based attention mapping with a Laplacian formulation. To support practical deployment, we develop a CUDA-accelerated implementation that features efficient Laplacian similarity computation and a Newton--Schulz-based inverse solver for fast inference.
\begin{figure}[htbp]
    \centering
    \begin{subfigure}[b]{0.48\columnwidth} 
        \centering
        \includegraphics[width=\textwidth]{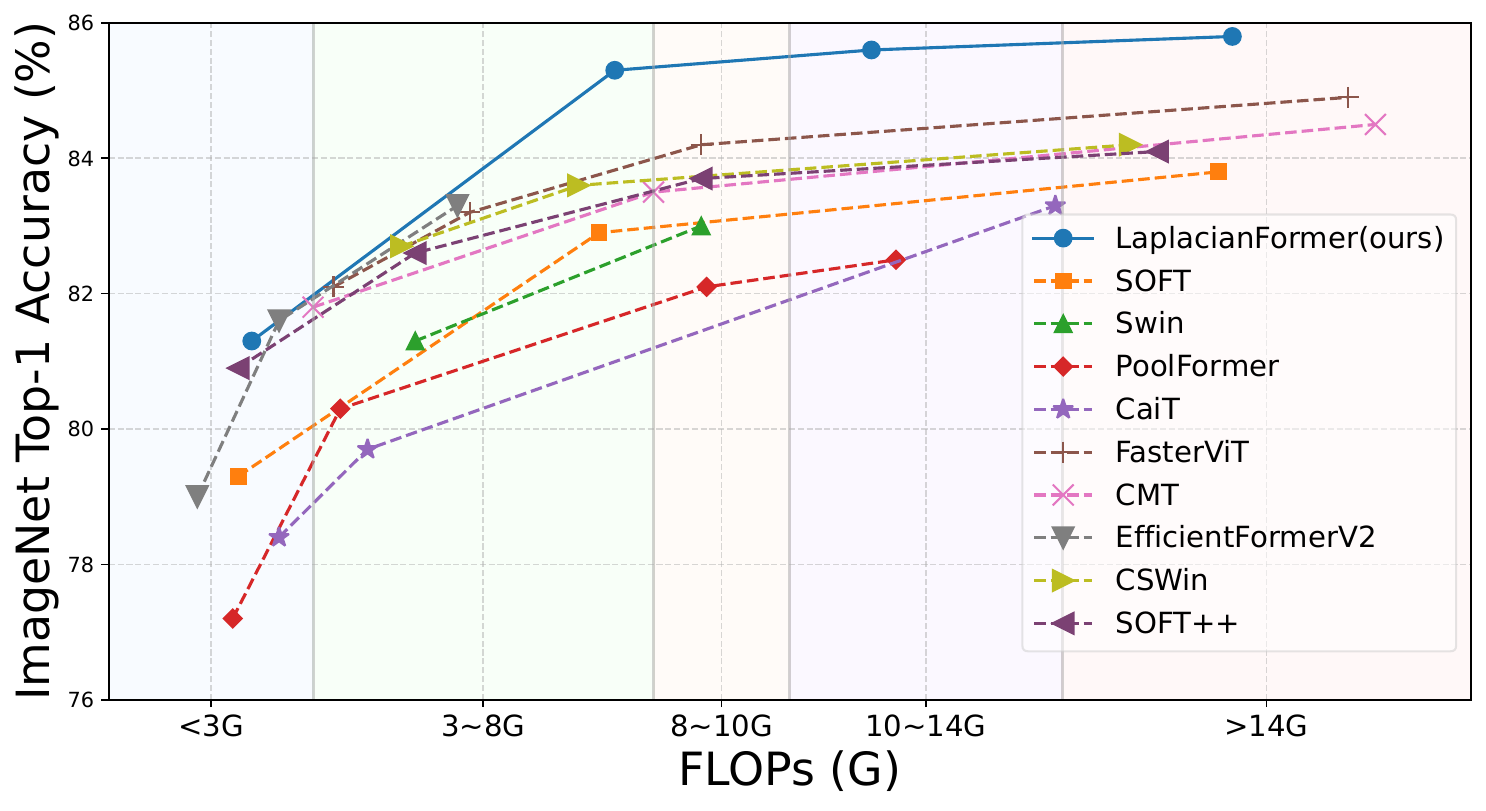}
        \caption{}
        \label{fig:imagenet_performance}
    \end{subfigure}
    \begin{subfigure}[b]{0.48\columnwidth}
        \centering
        \includegraphics[width=\textwidth]{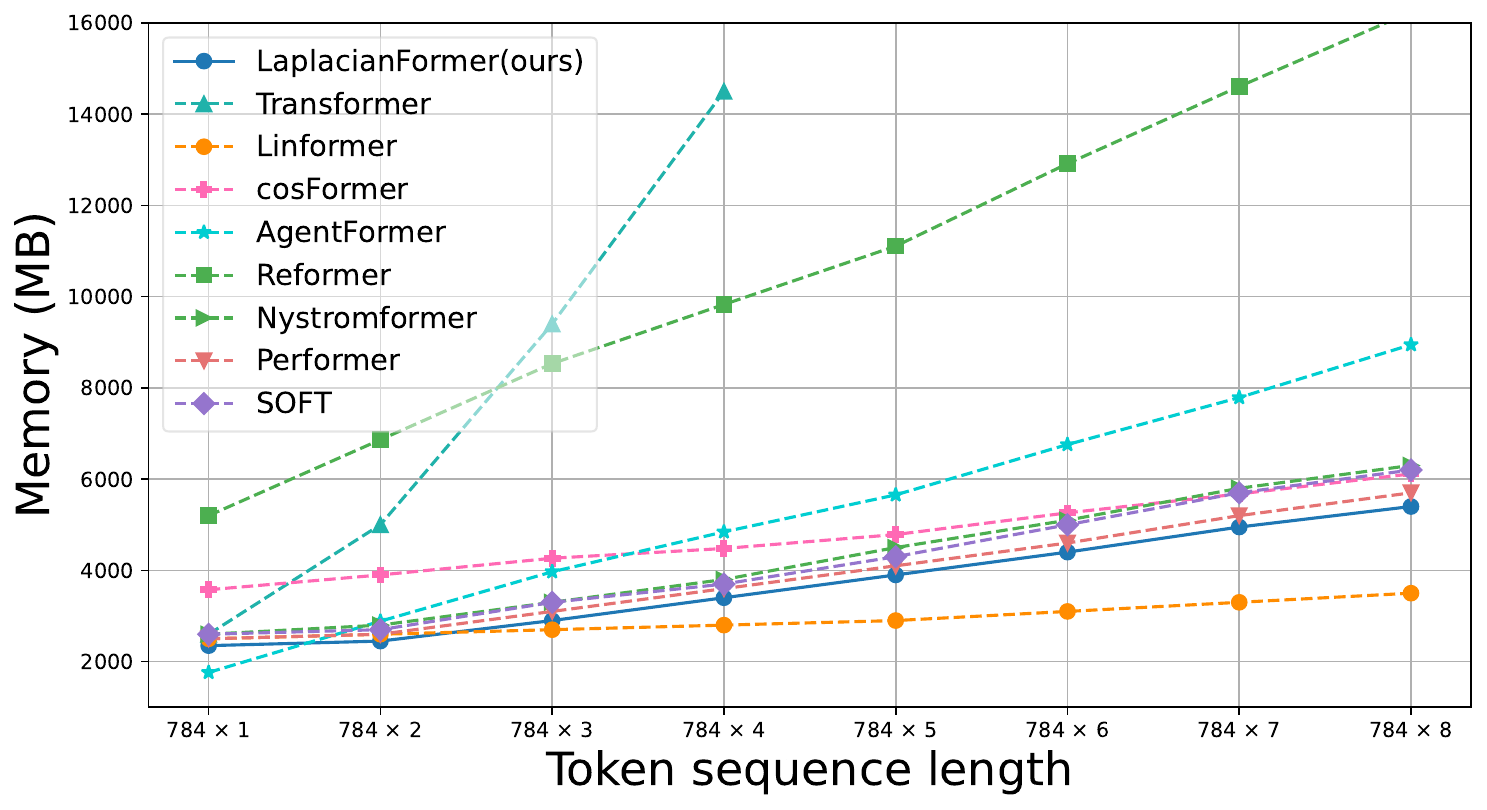}
        \caption{}
        \label{fig:transformer_memory_comparison}
    \end{subfigure}
    \caption{Accuracy and Memory Comparison.(a) Top-1 accuracy vs. FLOPs on ImageNet-1k~\cite{Deng2009ImageNetAL}. LaplacianFormer offers a strong accuracy-efficiency trade-off, outperforming prior models across all FLOPs ranges. (b) GPU memory usage across input lengths. LaplacianFormer shows linear scaling, matches efficient Transformers like Performer and SOFT, and far better than the vanilla Transformer.}
    \label{fig:sota}
\end{figure}
As shown in Figure~\ref{fig:sota}, LaplacianFormer achieves strong performance across accuracy, memory efficiency, and scalability metrics on standard vision benchmarks.

Our main contributions are summarized as follows:
\begin{itemize}
  \item We propose \textbf{LaplacianFormer}, a linear attention model grounded in the Laplacian kernel, which enhances long-range dependency modeling while maintaining scalability and efficiency.
  \item We develop a CUDA-optimized implementation that integrates Laplacian attention with a Newton--Schulz inverse module, significantly improving runtime and memory efficiency.
  \item We validate LaplacianFormer on ImageNet-1k~\cite{Deng2009ImageNetAL} and downstream vision tasks such as object detection and instance segmentation, demonstrating competitive performance across multiple benchmarks.
\end{itemize}

\section{Related work}
\textbf{Vision Transformer with Softmax Attention.}
The Vision Transformer (ViT)~\cite{Dosovitskiy2020AnII} has demonstrated exceptional performance and has been widely adopted for a range of computer vision tasks, including image classification~\cite{Touvron2021GoingDW, Liu2021SwinTH, Liu2021SwinTV, Touvron2022DeiTIR}, object detection~\cite{Zhu2020DeformableDD, Zhang2022DINODW}, and semantic segmentation~\cite{Zheng2020RethinkingSS, Xie2021SegFormerSA, Cheng2021PerPixelCI}. By substituting traditional convolutional operations with self-attention mechanisms, ViT enables the modeling of global dependencies within images, offering a powerful alternative to convolutional neural networks (CNNs). However, a major bottleneck lies in the quadratic time and memory complexity, $\mathcal{O}(n^2)$, of standard softmax attention, which significantly restricts its scalability—especially for high-resolution inputs—and limits its deployment on resource-constrained edge devices.

\textbf{Linear Attention: A Scalable Alternative.} To mitigate the computational overhead of softmax attention, linear attention has emerged as an efficient alternative. While softmax attention requires $\mathcal{O}(N^2 d)$ operations to compute pairwise similarities, linear attention reduces this to $\mathcal{O}(N d^2)$ by replacing the softmax with kernel-based approximations and reordering computations. Specifically, computing $K^\top V$ first decouples the attention process and enables linear scalability. This efficiency gain becomes particularly significant in modern Transformers, where the token count $N$ typically exceeds the channel dimension $d$. Linear attention thus maintains the ability to model long-range dependencies while substantially improving computational efficiency.

Building on this foundation, a number of linear attention variants have been developed to reduce computational cost while enhancing model capacity. Nyströmformer~\cite{Xiong2021NystrmformerAN} approximates softmax attention via Nyström matrix decomposition. SOFT~\cite{Lu2021SOFTST} replaces softmax with a learnable kernel based on low-rank approximations. Skyformer~\cite{Chen2021SkyformerRS} incorporates Gaussian kernels and Nyström sampling to improve scalability in vision tasks, while Gaussian Kernelized Attention~\cite{Kashiwagi2021GaussianKS} applies a similar design to speech decoding. Performer~\cite{choromanski2021rethinking} employs orthogonal random features (FAVOR+) to achieve linear-time softmax approximation. Cosformer~\cite{zhen2022cosformer} replaces softmax with a cosine-based reweighting scheme to achieve linear complexity. Hedgehog~\cite{Zhang2024TheH} introduces structured linear transformations to approximate softmax behavior, providing a unified and scalable alternative. HiViT~\cite{zhanghivit} streamlines hierarchical Transformers by reducing token mixing and applying uniform downsampling.

While differing in architecture, many of these methods share a common reliance on Gaussian kernels to approximate attention weights. In this work, we replace the Gaussian kernel with a Laplace kernel that ensures injectivity and enhances expressiveness, grounded in rigorous theoretical analysis.

\section{Preliminaries}

\subsection{Softmax Self-Attention}

Softmax self-attention is a core operation in transformer models. Given an input sequence $\mathbf{X} \in \mathbb{R}^{N \times d_e}$ of $N$ tokens embedded in a $d_e$-dimensional space, we compute queries, keys, and values via linear projections:$
\mathbf{Q} = \mathbf{X} \mathbf{W}_Q, \mathbf{K} = \mathbf{X} \mathbf{W}_K, \mathbf{V} = \mathbf{X} \mathbf{W}_V$,
where $\mathbf{W}_Q, \mathbf{W}_K, \mathbf{W}_V \in \mathbb{R}^{d_e \times d}$ are learnable parameters and $\mathbf{Q}, \mathbf{K}, \mathbf{V} \in \mathbb{R}^{N \times d}$.
The standard scaled dot-product attention for token $i$ is:
\begin{equation}
\mathrm{Attention}(\mathbf{Q}, \mathbf{K}, \mathbf{V})_i =
\frac{ 
  \sum_{j=1}^N \exp\left(\frac{\mathbf{q}_i^\top \mathbf{k}_j}{\sqrt{d}}\right)\, \mathbf{v}_j
}{
  \sum_{j=1}^N \exp\left(\frac{\mathbf{q}_i^\top \mathbf{k}_j}{\sqrt{d}}\right)
}
\label{eq:attn-elementwise}
\end{equation}

and in matrix form:
\begin{equation}
\mathrm{Attention}(\mathbf{Q}, \mathbf{K}, \mathbf{V}) =
\mathrm{Softmax}\left( \frac{\mathbf{Q} \mathbf{K}^\top}{\sqrt{d}} \right) \mathbf{V}
\label{eq:attn-matrix}
\end{equation}

This formulation computes a similarity matrix $\mathbf{QK}^\top \in \mathbb{R}^{N \times N}$, resulting in $\mathcal{O}(N^2 d)$ complexity due to all pairwise interactions.
To reduce cost, consider removing the softmax. Without it, attention simplifies to $\left( \frac{\mathbf{Q} \mathbf{K}^\top}{\sqrt{d}} \right) \mathbf{V},$
which can be reordered as $\mathbf{Q} (\mathbf{K}^\top \mathbf{V})$ using associativity. This avoids forming the large $N \times N$ matrix and reduces complexity to $\mathcal{O}(N d^2)$, linear in $N$ if $d$ is small. This insight underlies linear attention, which replaces softmax with associative operations for improved efficiency. Figure~\ref{fig:softmax-vs-linear} compares softmax and linear self-attention.

\begin{figure}[!htbp]
    \centering
    \includegraphics[width=\linewidth]{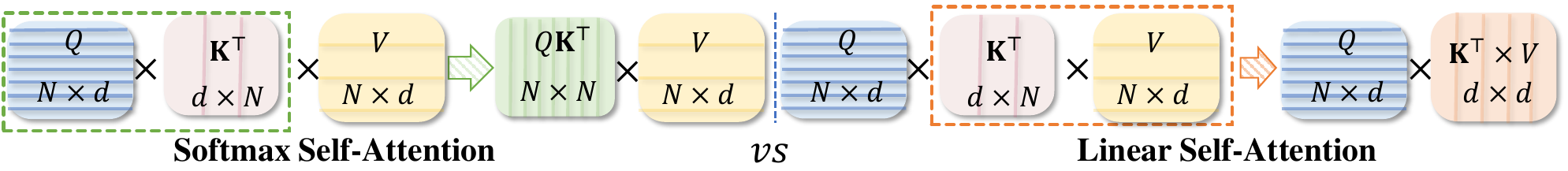}
    \caption{Comparison between Softmax Self-Attention (left) and Linear Self-Attention (right). 
    The former computes a full $N \times N$ similarity matrix, while the latter enables associativity through kernel decomposition, reducing the complexity from $\mathcal{O}(N^2)$ to $\mathcal{O}(N)$.}
    \label{fig:softmax-vs-linear}
\end{figure}

\subsection{Linear Self-Attention}
\label{sec:Linear}

Linear self-attention reformulates the attention mechanism by approximating the similarity computation through kernel-based feature mappings. Specifically, let \( \phi(\cdot) \) denote a kernel function, and define the similarity between a query \( \mathbf{q}_i \) and a key \( \mathbf{k}_j \) as: $\mathrm{Sim}(\mathbf{q}_i, \mathbf{k}_j) = \phi(\mathbf{q}_i)\, \phi(\mathbf{k}_j)^\top.$

The kernel-based formulation replaces the exponential dot product with a more general similarity function, enabling efficient reordering of computations and eliminating the softmax operation. The attention output for the \( i \)-th query can be written as:
\begin{equation}
\mathrm{Attention}(\mathbf{Q}, \mathbf{K}, \mathbf{V})_i
= 
\frac{
  \phi\bigl(\mathbf{q}_i\bigr) 
  \Bigl(\sum_{j=1}^N \phi\bigl(\mathbf{k}_j\bigr)^\top \mathbf{v}_j \Bigr)
}{
  \phi\bigl(\mathbf{q}_i\bigr) 
  \Bigl(\sum_{j=1}^N \phi\bigl(\mathbf{k}_j\bigr)^\top \Bigr)
}.
\end{equation}
Since the key-value summaries 
\( \sum_{j=1}^N \phi(\mathbf{k}_j)^\top \mathbf{v}_j \) and 
\( \sum_{j=1}^N \phi(\mathbf{k}_j)^\top \) 
are independent of the query, they can be precomputed, allowing each attention output to be computed in linear time.

\section{Method}
\subsection{LaplacianFormer}

Our LaplacianFormer instantiates the general kernel attention framework described in Section~\ref{sec:Linear} using a novel Laplace-based transformation inspired by recent work on attention injectivity~\cite{han2024inline}. Instead of directly using the Laplacian kernel as a similarity score, we construct a normalized kernel representation for each query \( \mathbf{q}_i \) to enhance feature discrimination:

\begin{equation}
\mathbf{z}_i = \boldsymbol{\Sigma}^{-\frac{1}{2}} \left( 
  \left[ k(\mathbf{q}_i, \mathbf{k}_1), \dots, k(\mathbf{q}_i, \mathbf{k}_N) \right]^\top
  - \frac{1}{N} \sum_{j=1}^N k(\mathbf{q}_i, \mathbf{k}_j)
\right) + \frac{1}{N},
\label{eq:injective-map}
\end{equation}

where \( k(\mathbf{q}, \mathbf{k}) = \exp\left(-\frac{\|\mathbf{q} - \mathbf{k}\|_1}{\lambda}\right) \) denotes the Laplacian kernel. The whitening matrix \( \boldsymbol{\Sigma}^{-1/2} \in \mathbb{R}^{N \times N} \) is ideally computed from the covariance of query–key similarity vectors \( \mathbf{g}_i \in \mathbb{R}^N \), where each $
\mathbf{g}_i = [k(\mathbf{q}_i, \mathbf{k}_1), \dots, k(\mathbf{q}_i, \mathbf{k}_N)]^\top.
$

In practice, computing the full inverse square root \( \boldsymbol{\Sigma}^{-1/2} \) is computationally prohibitive for long sequences, requiring eigendecomposition with \( \mathcal{O}(N^3) \) time and \( \mathcal{O}(N^2) \) memory. To mitigate this, we approximate the whitening operation with a diagonal estimator that normalizes each feature dimension independently across a batch of query–key vectors \( \{\mathbf{g}_i\}_{i=1}^B \), where \( B \) is the batch size.

For each dimension \( j \in \{1, \dots, N\} \), we compute the empirical mean and variance:
\begin{equation}
\mu_j = \frac{1}{B} \sum_{i=1}^{B} \mathbf{g}_{ij}, \quad
\sigma_j^2 = \frac{1}{B} \sum_{i=1}^{B} \left( \mathbf{g}_{ij} - \mu_j \right)^2.
\end{equation}

We then normalize each element of the similarity vector:
$\tilde{\mathbf{g}}_{ij} = \frac{\mathbf{g}_{ij} - \mu_j}{\sqrt{\sigma_j^2 + \varepsilon}}$,
where \( \varepsilon \) is a small constant added for numerical stability. This corresponds to a diagonal whitening matrix:
\begin{equation}
\mathbf{D}^{-1/2} = \mathrm{diag}\left( \frac{1}{\sqrt{\sigma_1^2 + \varepsilon}}, \dots, \frac{1}{\sqrt{\sigma_N^2 + \varepsilon}} \right).
\end{equation}

This approximation preserves the centering and scaling effects of full whitening, improves stability, and reduces both time and memory complexity from quadratic to linear in \( N \), making it compatible with efficient kernelized attention. For completeness, we define the kernel similarity matrix among keys as $\mathbf{G}_{ij} \&= k(\mathbf{k}_i, \mathbf{k}_j),$ $\mathbf{\Sigma}_{\text{key}} \&= \mathbf{P} \mathbf{G} \mathbf{P}^\top,$ $\text{with } \mathbf{P} = \mathbf{I} - \frac{1}{N} \mathbf{1} \mathbf{1}^\top.$ Although not directly used in normalization, the key–key covariance \( \mathbf{\Sigma}_{\text{key}} \) provides a useful interpretation of the kernel structure.

We prove in the appendix that the transformation in Eq.~\eqref{eq:injective-map} is injective under mild assumptions, ensuring that distinct queries yield distinct outputs. This injectivity property aligns with the behavior of softmax attention, which is inherently injective and yields full-rank attention maps that preserve fine-grained token distinctions~\cite{han2024inline}. The final attention output incorporates both global interactions via kernelized similarity and local context modeling through depth-wise convolution. Specifically, we compute
\begin{equation}
\mathrm{Attention}(\mathbf{Q}, \mathbf{K}, \mathbf{V}) = \mathbf{Z} \mathbf{V} + \mathrm{DWC}(\mathbf{V}),
\end{equation}

where \( \mathbf{Z} \in \mathbb{R}^{N \times N} \) stacks each \( \mathbf{z}_i^\top \) as a row, and \( \mathrm{DWC}(\cdot) \) denotes a depth-wise convolution applied over the value sequence \( \mathbf{V} \).

\subsection{Nyström Approximation for Laplacian Kernel}

To efficiently compute Laplacian kernel-based attention, we adopt a Nyström approximation~\cite{Williams2000UsingTN,Xiong2021NystrmformerAN}. The Nyström method approximates the kernel matrix \( \mathbf{G} \) by selecting a small set of landmark keys and computing a rank-reduced estimate \( \widetilde{\mathbf{G}} \in \mathbb{R}^{N \times N} \), defined as $\widetilde{\mathbf{G}} = \mathbf{C} \mathbf{W}^{\dagger} \mathbf{C}^\top$,
where \( \mathbf{C} \in \mathbb{R}^{N \times m} \) is the matrix of Laplacian kernel similarities between all queries and a selected subset of \( m \ll N \) landmark keys, \( \mathbf{W} \in \mathbb{R}^{m \times m} \) contains pairwise Laplacian kernel similarities among the \( m \) selected landmark keys, and \( \mathbf{W}^{\dagger} \) denotes the Moore–Penrose pseudoinverse of \( \mathbf{W} \). More specifically, the \( (i,\ell) \)-th entry of \( \mathbf{C} \) is defined as:
\begin{equation}
\mathbf{C}_{i\ell} = k(\mathbf{q}_i, \tilde{\mathbf{k}}_\ell)
= \exp\left(-\frac{1}{\lambda} \left\| \mathbf{q}_i - \tilde{\mathbf{k}}_\ell \right\|_1 \right),
\label{eq:C}
\end{equation}

where \( \mathbf{q}_i \) is the query vector of the \( i \)-th token and \( \tilde{\mathbf{k}}_\ell \in \{\mathbf{k}_1, \dots, \mathbf{k}_N\} \) is the \( \ell \)-th landmark key, while the \( (\ell, \ell') \)-th entry of \( \mathbf{W} \) is computed as:
\begin{equation}
\mathbf{W}_{\ell \ell'} = k(\tilde{\mathbf{k}}_\ell, \tilde{\mathbf{q}}_{\ell'})
= \exp\left(-\frac{1}{\lambda} \left\| \tilde{\mathbf{k}}_\ell - \tilde{\mathbf{q}}_{\ell'} \right\|_1 \right),
\label{eq:W}
\end{equation}
where \( \tilde{\mathbf{k}}_\ell \) and \( \tilde{\mathbf{q}}_{\ell'} \) are the landmark key and query vectors selected by Nyström sampling, respectively.

The process for computing the low-rank Laplacian kernel via Nyström approximation is outlined in Algorithm~\ref{alg:laplacianformer}. In Line 2, the sampling function $f_s$ selects $m \ll N$ landmark tokens from the full set of queries and keys, forming the landmark matrices $\widetilde{\mathbf{Q}}, \widetilde{\mathbf{K}} \in \mathbb{R}^{m \times d}$. Lines 3–5 perform the core kernel operations: Line 3 computes the landmark kernel matrix $\mathbf{W}$ (Eq.~\ref{eq:W}), Line 4 computes the query-to-landmark kernel matrix $\mathbf{C}$ (Eq.~\ref{eq:C}), and Line 5 applies the Nyström approximation using $\mathbf{W}^\dagger$ to obtain the final attention matrix $\hat{\mathbf{S}}$.

\begin{algorithm}
\normalsize
\caption{Laplacian Kernel with Nyström Approximation}
\label{alg:laplacianformer}
\begin{algorithmic}[1]
\State \textbf{Input:} Queries $Q \in \mathbb{R}^{N \times d}$, Keys $K \in \mathbb{R}^{N \times d}$, Nyström sampling function $f_s$
\State \textbf{Sampling:} $\tilde{Q}, \tilde{K} \gets f_s(Q), f_s(K)$ \Comment{Select $m \ll n$ landmark points}
\State $W \gets \exp\left(-\frac{1}{\lambda} \|\tilde{Q} \ominus \tilde{K}\|_1\right)$ \Comment{Kernel matrix on sampled points}
\State $C \gets \exp\left(-\frac{1}{\lambda} \|Q \ominus \tilde{K}\|_1\right)$ \Comment{Cross-kernel between all queries and landmarks}
\State $\hat{G} \gets C W^{\dagger} C^{\top}$ \Comment{Low-rank approximation of full kernel matrix}
\State \textbf{Output:} $\hat{G}$
\end{algorithmic}
\end{algorithm}

\begin{algorithm}
\normalsize 
\caption{\normalsize Newton--Schulz Iteration for Approximating \( \mathbf{W}^{\dagger} \)}
\label{alg:newton-schulz-w-pinv}
\begin{algorithmic}[1]
\setlength{\parsep}{0pt}   
\setlength{\partopsep}{0pt} 
\State \textbf{Input:} Landmark kernel matrix \( \mathbf{W} \in \mathbb{R}^{m \times m} \), number of iterations \( \mathcal{T} \in \mathbb{Z}^+ \)
\State Add small perturbation: \( \mathbf{W} \gets \mathbf{W} + \epsilon \mathbf{I} \), where \( \epsilon > 0 \)
\State Initialize scaling factor: \( \boldsymbol{\alpha} \gets \frac{2}{\|\mathbf{W}\|_2} \)
\State Initialize: \( \mathbf{X}_0 \gets \boldsymbol{\alpha} \mathbf{W}^\top \)
\For{ \( k = 1 \) to \( \mathcal{T} \) }
    \State \( \mathbf{X}_k \gets \mathbf{X}_{k-1} (2\mathbf{I} - \mathbf{W} \mathbf{X}_{k-1}) \)
\EndFor
\State \textbf{Output:} Approximate pseudoinverse \( \mathbf{X}_\mathcal{T} \approx \mathbf{W}^{\dagger} \)
\end{algorithmic}
\end{algorithm}

\textbf{Laplacian Kernel Inversion via Newton--Schulz Iteration.} 
To efficiently and stably approximate the inverse of the landmark kernel matrix 
$\mathbf{W} \in \mathbb{R}^{m \times m}$, which is symmetric and positive semi-definite, 
we use the Newton--Schulz iteration. Since convergence requires $\mathbf{W}$ to be strictly 
positive definite, we apply a small diagonal perturbation 
$\mathbf{W} \leftarrow \mathbf{W} + \epsilon \mathbf{I}$, with $\epsilon > 0$, preserving the 
structure while ensuring stability. Unlike inversion or SVD-based methods, Newton--Schulz relies only on matrix multiplications and 
additions, making it GPU-friendly. The iteration starts with 
$\mathbf{X}_0 = \boldsymbol{\alpha} \mathbf{W}^\top$, where 
$\boldsymbol{\alpha} = \frac{2}{\|\mathbf{W}\|_2}$ ensures 
$\|\mathbf{I} - \boldsymbol{\alpha} \mathbf{W} \mathbf{W}^\top\| < 1$. The update rule is: 
$\mathbf{X}_{k+1} = \mathbf{X}_k (2\mathbf{I} - \mathbf{W} \mathbf{X}_k).$ 
The full algorithm is detailed in Algorithm~\ref{alg:newton-schulz-w-pinv}.

\textbf{Sampling Strategies for Landmark Selection.}  
To efficiently approximate attention, we adopt a pooling-based landmark selection strategy inspired by PVTv2~\cite{Wang2021PVTVI}.  
The query tensor $\mathbf{Q} \in \mathbb{R}^{B \times H \times N \times d}$ is reshaped into a spatial map $\mathbb{R}^{B \cdot H \times d \times H' \times W'}$, where $N = H' \times W'$.  
We apply average pooling with kernel size $r$ and stride $r$ to aggregate each $r \times r$ region into a landmark token, yielding $\frac{H'}{r} \times \frac{W'}{r}$ tokens per head.

We also explored a depthwise convolution-based selection strategy, in which each $r \times r$ region is processed by a lightweight filter to extract local structure. While this approach offers greater expressiveness, it yielded no significant improvement over average pooling in our experiments. Given its higher computational cost and additional parameters, we adopt average pooling by default.

\textbf{Convergence Guarantee and Complexity Analysis.} 
The Newton–Schulz iteration is guaranteed to converge for strictly positive definite matrices; this condition is satisfied by applying a small diagonal perturbation to $\mathbf{W}$. 
Our method achieves linear time and space complexity $\mathcal{O}(n)$ with respect to the input length $n$. 
A complete complexity analysis and proof of convergence are provided in the appendix.

\subsection{CUDA Acceleration}

\begin{figure}
    \centering
    \includegraphics[width=1.0\linewidth]{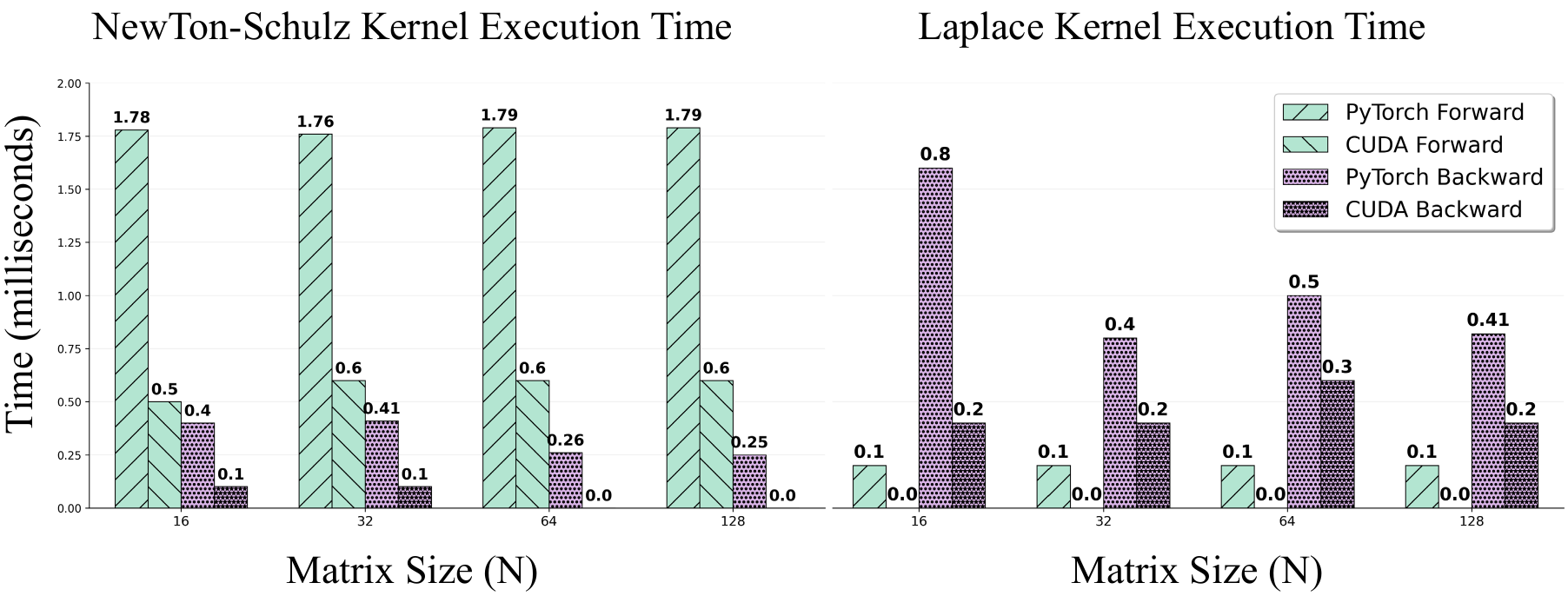}
    \caption{\normalsize \textbf{Execution time breakdown of custom CUDA kernels.}  
    Comparison of forward and backward execution time for Newton--Schulz iteration (left) and Laplacian kernel (right) across different matrix sizes (batch = 1, 2 heads, 32 channels). CUDA execution times ($<0.05$ms) are shown as 0.0 due to timing resolution limits.}
    \label{fig:kernel-execution-breakdown}
\end{figure}

The Laplacian kernel fuses distance computation and exponential transformation into a single operation, reducing global memory access. For Newton--Schulz iteration, we optimize matrix multiplications via tiling and register reuse.

To assess the effectiveness of our CUDA acceleration, we compare the execution time of both Laplacian kernel evaluation and Newton--Schulz iteration against their PyTorch counterparts~\cite{paszke2017automatic}, with and without custom CUDA kernels. As shown in Figure~\ref{fig:kernel-execution-breakdown}, our implementation consistently outperforms the baseline across various matrix sizes. The speedup is particularly prominent in backward passes, which benefit from precomputed gradients and in-place memory reuse. Numerical accuracy comparisons are provided in the appendix.

\section{Experiments}

\begin{table}[t]
\centering
\scriptsize
\caption{Performance comparison with state-of-the-art models on ImageNet.}
\label{tab:comparison}
\begin{tabular}{clcccc}
\toprule
FLOPs  range& Model & Params & FLOPs & Top-1 \%$\uparrow$ & Image Size \\
\hline
\multirow{5}{*}{$<3$G} &  Agent-Deit-T~\cite{Han2023AgentAO} & 6.0M & 1.2G & 74.9 & 224 \\
& VRWKV-T~\cite{Duan2024VisionRWKVEA} & 6.2M & 1.2G & 75.1& 256 \\
& PVT-T-PolaFormer~\cite{meng2025polaformer} & 12M & 2.0G & 78.8 & 224 \\
& FL-PVTv2-B1~\cite{Han2023FLattenTV} & 13M & 2.2G & 79.5 & 224 \\
& BiFormer-T~\cite{zhu2023biformer} & 13.1M & 2.2G & 81.4 & 224 \\
\rowcolor{blue!20} & LaplacianFormer-Tiny & 12.1M & 2.1G & \textbf{81.4} & 224 \\
\hline
\multirow{7}{*}{3$\sim$8G} & InLine-CSwin-S~\cite{han2024inline} & 33M & 6.8G & 83.8 & 224 \\
& SViT-S~\cite{huang2023stvit} & 25M & 4.4G & 83.6 & 224 \\
& BiFormer-S~\cite{zhu2023biformer} & 25.5M & 4.5G & 83.8 & 224 \\
& HiViT-T~\cite{zhanghivit} & 19M & 4.6G & 82.1 & 224 \\
& Agent-PVT-S~\cite{Han2023AgentAO} & 20.6M & 4.0G & 82.2 & 224 \\
\rowcolor{blue!20} & LaplacianFormer-Small & 25.7M & 4.8G & \textbf{83.8} & 224 \\
\hline
\multirow{4}{*}{8$\sim$10G} & SViT-B~\cite{huang2023stvit} & 52M & 9.9G & 84.8& 224 \\
& SOFT++-Medium~\cite{Lu2024} & 45M & 7.2G & 83.7 & 224 \\
&  BiFormer-B~\cite{zhu2023biformer} & 56.8M & 9.8G & 84.3& 224 \\
& Swin-S-PolaFormer~\cite{meng2025polaformer} & 50M & 8.7G & 83.6 & 224 \\
& SLAB-Swin-S~\cite{Guo2024SLABET} & - & 8.7G & 81.8 & 224 \\
\rowcolor{blue!20} & LaplacianFormer-Medium & 46.3M & 7.43G & \textbf{85.3}& 224 \\
\hline
\multirow{5}{*}{10$\sim$14G} & StructViT-B-8-1~\cite{Kim2024LearningCS} & 52M & 12G & 84.3 & 224 \\
& SOFT++-Large~\cite{Lu2024} & 64M & 11G & 84.1 & 224 \\
& NAT-B~\cite{hassani2023neighborhood} & 90M & 13.7G & 84.3& 224 \\
& MogaNet-L~\cite{iclr2024MogaNet} & 82.5M & 15.9G & 84.7& 224 \\
& FLatten-CSwin-S~\cite{Han2023FLattenTV} & 35M & 6.9G & 83.6 & 224 \\

\rowcolor{blue!20} & LaplacianFormer-Large & 63.1M & 11.2G & \textbf{85.6}& 224 \\
\hline
\multirow{4}{*}{$>$14G} & VRWKV-B~\cite{Duan2024VisionRWKVEA} & 93.7M & 18.2G & 82.0& 224 \\
& SViT-L~\cite{huang2023stvit} & 95M & 15.6G & 85.3& 224 \\
& MLLA-B~\cite{han2024demystify} & 96M & 16.2G & 85.3& 224 \\
& HiViT-B~\cite{zhanghivit} & 66M & 15.9G & 83.8& 224 \\
\rowcolor{blue!20} & LaplacianFormer-Huge & 78.5M & 15.5G & \
\textbf{85.8} & 224 \\
\bottomrule
\end{tabular}

\label{tab:comparison}
\end{table}

\subsection{Image Classification}

\paragraph{Datasets and model architectures.}
We evaluate our model on the ImageNet-1K dataset~\cite{Deng2009ImageNetAL}, which contains 1.28M training and 50K validation images across 1000 classes. Built on the PVT architecture~\cite{Wang2021PyramidVT}, our LaplacianFormer re-designs the self-attention mechanism by constructing an injective attention function based on the Laplacian kernel. To ensure training efficiency, we implement two custom CUDA kernels: one for computing the Laplacian kernel matrix and another for performing Newton–Schulz iteration to approximate the inverse. Additionally, RoPE~\cite{SU2024127063} is adopted for positional encoding. All other settings follow the original PVT configuration. Training is performed with a batch size of 1024 on multiple NVIDIA H800 GPUs.

\textbf{Comparison.} We compare the Top-1 accuracy and computational cost of our LaplacianFormer against state-of-the-art Vision Transformers. As shown in Table~\ref{tab:comparison}, models are grouped by FLOPs:$<$1G, 1–3G, 3–5G, 5–10G, and $>$10G. LaplacianFormer consistently achieves the highest Top-1 accuracy across all FLOP ranges. This result demonstrates the superiority of LaplacianFormer over existing methods.

\subsection{Object Detection and Instance Segmentation}
\textbf{Results.}~Table~\ref{tab:backbone_comparison} summarizes the comparison results under the $1\times$ schedule for both Mask R-CNN~\cite{He2017MaskR} and RetinaNet~\cite{Lin2017FocalLF}. Across all scales, LaplacianFormer consistently surpasses previous backbone designs. For instance, LaplacianFormer-Tiny achieves 43.2~$\mathrm{AP}^b$ and 40.3~$\mathrm{AP}^m$ under Mask R-CNN, outperforming SOFT++-Tiny and FL-PVT-T. Under RetinaNet, it further achieves 42.5~$\mathrm{AP}^b$, ranking first among all tiny-scale counterparts. As the model size increases, LaplacianFormer-Medium yields 48.0~$\mathrm{AP}^b$ and 43.5~$\mathrm{AP}^m$, establishing a new state-of-the-art within the medium-sized category. These results highlight the strong generalization and detection capabilities enabled by our Laplacian kernel attention mechanism.

\begin{table}[htbp]
\centering
\scriptsize
\caption{Comparison to other backbones using RetinaNet and Mask R-CNN with ``1×'' schedule.}

\setlength{\tabcolsep}{4pt} 
\definecolor{lightcyan}{RGB}{175,238,238} 

\begin{tabular}{l|cccccc|cccccc}
\toprule
\multirow{2}{*}{Backbone} & \multicolumn{6}{c|}{Mask R-CNN 1×} & \multicolumn{6}{c}{RetinaNet 1×} \\
& $AP^b$ & $AP^b_{50}$ & $AP^b_{75}$ & $AP^m$ & $AP^m_{50}$ & $AP^m_{75}$ & $AP^b$ & $AP^b_{50}$ & $AP^b_{75}$ & $AP^b_S$ & $AP^b_M$ & $AP^b_L$ \\
\midrule
Swin-T-PRepBN~\cite{Guo2024SLABET} & 42.9 & 65.8 & 46.8 & 39.3 & 62.6 & 41.9 & -- & -- & -- & -- & -- & -- \\
FL-PVT-T~\cite{Han2023FLattenTV} & 38.2 & 61.6 & 41.9 & 37.0 & 57.6 & 39.0 & -- & -- & -- & -- & -- & -- \\
SOFT++-Tiny~\cite{Lu2024} & 41.2 & 63.7 & 44.7 & 38.2 & 61.0 & 41.0 & 41.9 & 62.7 & 44.7 & 27.8 & 45.4 & 55.6 \\
\rowcolor{blue!20}
LaplacianFormer-Tiny & \textbf{43.2} & \textbf{66.1} & \textbf{47.2} & \textbf{40.3} & \textbf{63.0} & \textbf{42.9} & \textbf{42.5} & \textbf{64.1} & \textbf{46.4} & \textbf{29.1} & \textbf{46.9} & \textbf{57.8} \\
\midrule
PVT-S-PolaFormer~\cite{meng2025polaformer} & 43.9 & 66.1 & 47.9 & 40.2 & 63.1 & 43.0 & 43.2 & 64.1 & 46.4 & -- & -- & -- \\
InLine-PVT-S~\cite{han2024inline} & 43.4 & 66.4 & 47.1 & 40.1 & 63.1 & 43.3 & -- & -- & -- & -- & -- & -- \\
SOFT++-Small~\cite{Lu2024} & 43.8 & 66.0 & 47.5 & 40.1 & 63.0 & 43.0 & 43.7 & 64.9 & 46.8 & 28.7 & 47.4 & 57.6 \\
\rowcolor{blue!20}
LaplacianFormer-Small & \textbf{45.8} & \textbf{68.2} & \textbf{49.8} & \textbf{42.0} & \textbf{65.1} & \textbf{45.2} & \textbf{45.5} & \textbf{66.8} & \textbf{49.1} & \textbf{30.7} & \textbf{51.8} & \textbf{59.5} \\
\midrule
Agent-PVT-M~\cite{Han2023AgentAO} & 45.9 & 67.8 & 50.4 & 42.0 & 65.0 & 45.4 & -- & -- & -- & -- & -- & -- \\
FL-Swin-M~\cite{Han2023FLattenTV} & 44.0 & 66.4 & 48.0 & 40.3 & 63.4 & 43.5 & -- & -- & -- & -- & -- & -- \\
SOFT++-Medium~\cite{Lu2024} & 46.6 & 67.8 & 51.2 & 42.0 & 64.8 & 45.2 & 44.3 & 64.7 & 47.4 & 29.0 & 48.2 & 59.9 \\
\rowcolor{blue!20}
LaplacianFormer-Medium & \textbf{48.0} & \textbf{70.3} & \textbf{52.5} & \textbf{43.5} & \textbf{65.8} & \textbf{46.5} & \textbf{47.2} & \textbf{68.5} & \textbf{51.5} & \textbf{31.8} & \textbf{53.0} & \textbf{61.4} \\
\midrule
Swin-T-PolaFormer~\cite{meng2025polaformer} & 44.8 & 67.6 & 49.1 & 40.5 & 64.1 & 43.5 & -- & -- & -- & -- & -- & -- \\
Agent-PVT-L~\cite{Han2023AgentAO} & 46.9 & 69.2 & 51.4 & 42.8 & 66.2 & 46.2 & -- & -- & -- & -- & -- & -- \\
SOFT++-Large~\cite{Lu2024} & 47.0 & 68.3 & 51.7 & 42.2 & 65.2 & 45.4 & 47.0 & 67.8 & 50.4 & 30.2 & 50.9 & 62.0 \\
\rowcolor{blue!20}
LaplacianFormer-Large & \textbf{48.2} & \textbf{70.5} & \textbf{53.0} & \textbf{43.8} & \textbf{67.1} & \textbf{47.4} & \textbf{48.5} & \textbf{69.3} & \textbf{52.4} & \textbf{32.6} & \textbf{52.3} & \textbf{63.8} \\
\bottomrule
\end{tabular}

\label{tab:backbone_comparison}
\end{table}


\subsection{Ablation Studies}
\label{sec:ablation}

\textbf{Convergence Under Varying Condition Numbers.}
We evaluate solver convergence across varying condition numbers. We measure the relative error for Newton--Schulz using the Frobenius norm \( \|X_k - W^{\dagger}\|_F / \|W^{\dagger}\|_F \), and for CG using the Euclidean norm \( \|x_k - x^*\|_2 / \|x^*\|_2 \).As shown in Figure~\ref{fig:convergence-condition}, CG converges more rapidly under well-conditioned settings (e.g., \( \kappa = 2 \)) but degrades significantly as the condition number increases. In contrast, Newton--Schulz exhibits an initial warm-up phase followed by stable convergence even under ill-conditioned regimes (e.g., \( \kappa = 50 \)), indicating greater robustness in practice.
\begin{figure}[H]
  \centering
  \includegraphics[width=1.0\linewidth]{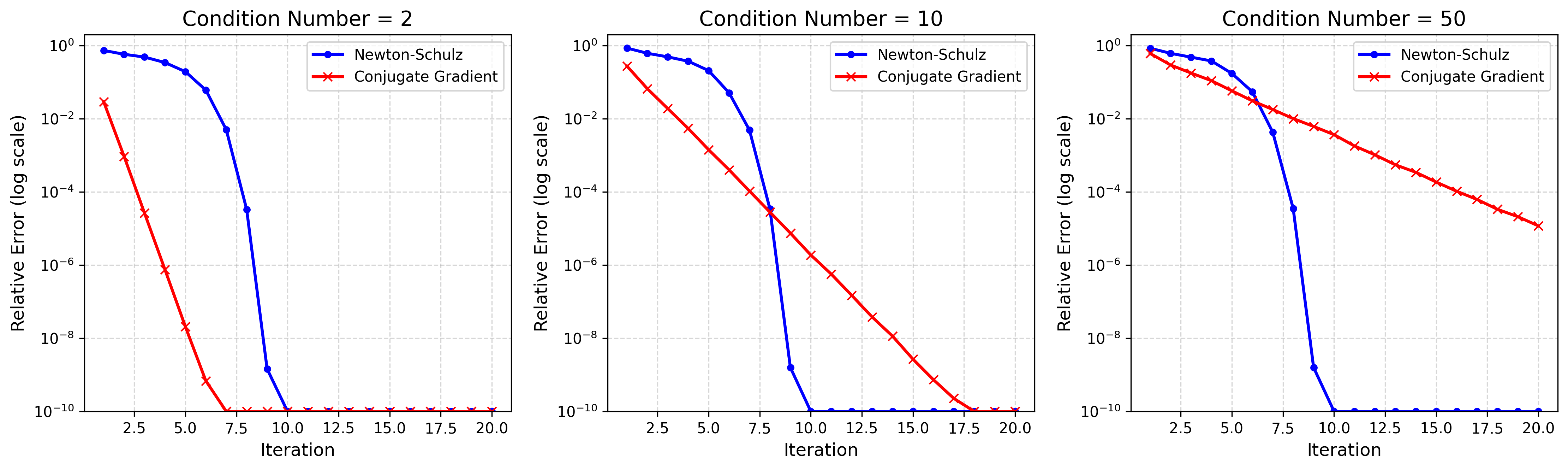}
  \caption{Convergence behavior of Newton--Schulz and conjugate gradient methods under varying condition numbers. Each plot shows relative error (log scale) vs. iteration.}
  \label{fig:convergence-condition}
\end{figure}


\textbf{Inverse Solver Effect.} We compare two iterative solvers—Newton–Schulz (used in our model) and conjugate gradient (CG)—for computing the kernel inverse in linear attention, both implemented with custom CUDA kernels. As shown in Table~\ref{tab:ablation-inverse}, Newton–Schulz achieves higher Top-1 accuracy than CG for both LaplacianFormer-Tiny (81.4\% vs. 79.2\%) and LaplacianFormer-Small (83.8\% vs. 81.4\%), likely due to better GPU convergence and numerical stability.

\textbf{Effect of Laplacian Kernel Scale.}
We study the impact of the Laplacian kernel scale $\lambda$ in the similarity function $\text{sim}_{\text{Lap}}(q, k) = \exp\left( -\frac{\|q - k\|_1}{\lambda} \right)$. As shown in Table~\ref{tab:lambda-effect}, the model achieves the best Top-1 accuracy (81.4\%) when $\lambda = 4$. Small $\lambda$ values (e.g., 0.5, 1) overly suppress long-range interactions, while large values (e.g., 8) yield overly smooth attention, diluting local detail. An intermediate scale ($\lambda = 4$) balances local sensitivity and global context, and is thus fixed in all experiments. Attention map visualizations (Figure~\ref{fig:lap-kernel-vis}) further validate this choice.

\begin{table}[H]
\caption{\footnotesize \textbf{Ablation studies on LaplacianFormer architecture.} \textit{(left)} Top-1 accuracy (\%) of LaplacianFormer variants using different inverse solvers: conjugate gradient (CG) vs. Newton--Schulz (NS).\textit{(right)} Effects of the Laplacian kernel scale $\lambda$ on LaplacianFormer-Tiny.}

\label{tab:ablation}
\centering

\begin{minipage}{0.48\textwidth}
    \centering
    \normalsize
    \label{tab:ablation-inverse}
    \begin{tabular}{@{}lcc@{}}
    \toprule
    Model & CG (\%) & NS (\%) \\
    \midrule
    LaplacianFormer-Tiny  & 79.2 & \textbf{81.4} \\
    LaplacianFormer-Small & 81.4 & \textbf{83.8} \\
    \bottomrule
    \end{tabular}
\end{minipage}
\hfill
\begin{minipage}{0.48\textwidth}
    \centering
    \normalsize
    \label{tab:lambda-effect}
    \begin{tabular}{@{}lccccc@{}}
    \toprule
    $\lambda$ & 0.5 & 1 & 2 & 4 & 8 \\
    \midrule
    Top-1 Acc (\%) $\uparrow$ & 79.4 & 79.6 & 80.1 & \textbf{81.4} & 78.5 \\
    \bottomrule
    \end{tabular}
\end{minipage}
\end{table}

\begin{figure}[H]
\centering
\includegraphics[width=0.7\linewidth]{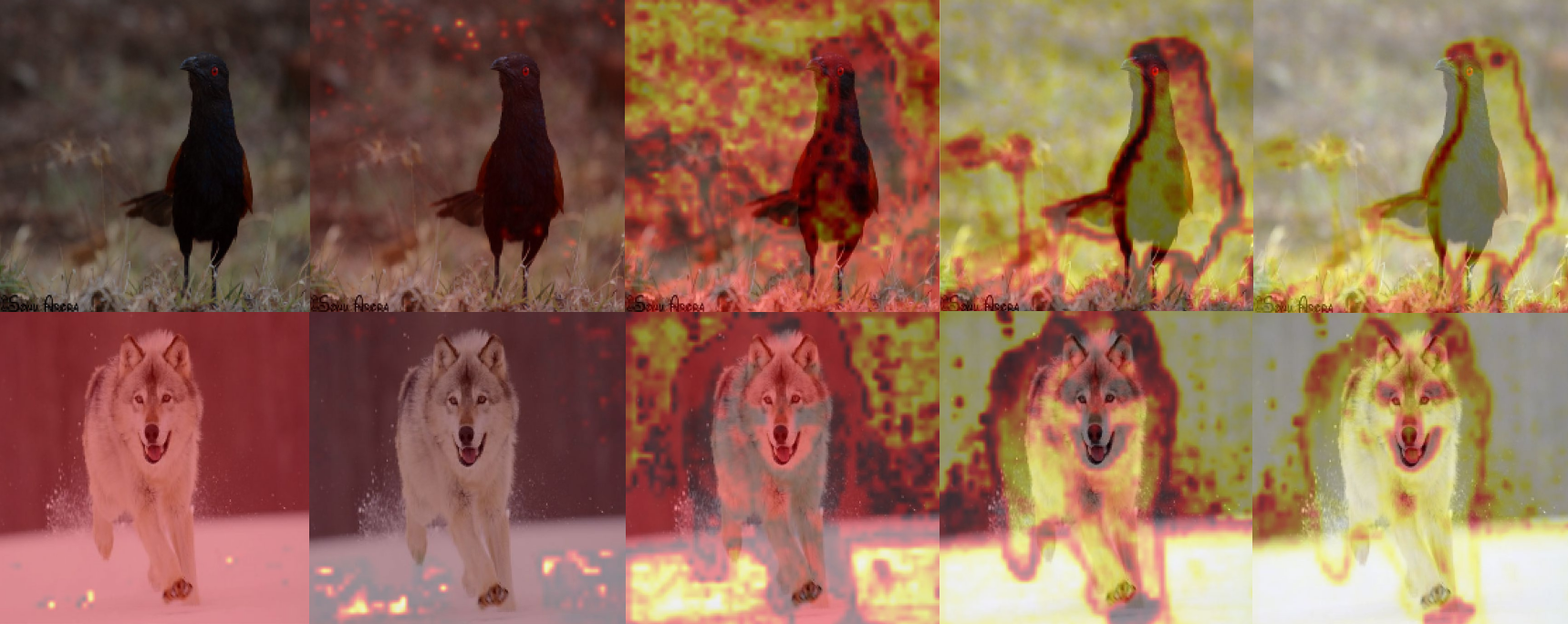}
\caption{\textbf{Visualization of attention maps under different Laplacian kernel scales $\lambda$.} 
From left to right: $\lambda = 0.5$, 1, 2, 4, 8. }
\label{fig:lap-kernel-vis}
\end{figure}

\section{Conclusions and Future Work}
We propose \textbf{LaplacianFormer}, a Transformer variant that employs a Laplacian kernel to construct injective and normalized attention, enabling fine-grained token discrimination with linear complexity. To ensure scalability, we adopt the Nyström approximation and accelerate computation via Newton--Schulz iteration, with efficient CUDA support for both forward and backward passes. LaplacianFormer strikes a balance between expressiveness and efficiency, performing well on both vision and long-sequence tasks. Moreover, it achieves strong results on downstream applications such as object detection and segmentation, further demonstrating its generalization capability.

This work specifically focuses on comparing Laplacian and Gaussian kernels—the latter being the dominant choice in prior linear attention models~\cite{Katharopoulos2020TransformersAR,Lu2021SOFTST,Chen2021SkyformerRS}. Our goal is to challenge this convention through both theoretical analysis and empirical validation. Broader comparisons with other kernel families (e.g., cosine, polynomial) are left as future work.

\section*{Acknowledgments}
This work was supported by Beijing Natural Science Foundation (L231013, L241056, JQ23014), National Natural Science Foundation of China (Nos. 62376271, U22B2034, 62572059, and 62365014),  Jiangxi Provincial Natural Science Foundation (No. 20253BAC280104), Shenzhen S\&T Programme (No. CJGJZD20240729141906008), Taishan Scholars Program No.TSQN202507241, Key R\&D Program of Shandong Province, China No.2025KJHZ013, Shandong Provincial University Youth Innovation and Technology Support Program No.2022KJ291, Shandong Provincial Natural Science Foundation for Young Scholars Program No.ZR2025QC1627, Qilu University of Technology (Shandong Academy of Sciences) Youth Outstanding Talent Program No. 2024QZJH02, Open Project of Key Laboratory of Computing Power Network and Information Security (No. 2024PY021), and the Open Project Program of State Key Laboratory of Virtual Reality Technology and Systems, Beihang University (No. VRLAB2025B03).

\bibliography{iclr2026_conference}

@inproceedings{zhanghivit,
  title={HiViT: A Simpler and More Efficient Design of Hierarchical Vision Transformer},
  author={Zhang, Xiaosong and Tian, Yunjie and Xie, Lingxi and Huang, Wei and Dai, Qi and Ye, Qixiang and Tian, Qi},
  booktitle={The Eleventh International Conference on Learning Representations},
  year={2023},
}

@inproceedings{Zhang2024TheH,
  title={The Hedgehog \& the Porcupine: Expressive Linear Attentions with Softmax Mimicry},
  author={Michael Zhang and Kush S. Bhatia and Hermann Kumbong and Christopher R'e},
  booktitle={The Twelfth International Conference on Learning Representations},
  year={2024},
}

@inproceedings{
  zhen2022cosformer,
  title={cosFormer: Rethinking Softmax In Attention},
  author={Zhen Qin and Weixuan Sun and Hui Deng and Dongxu Li and Yunshen Wei and Baohong Lv and Junjie Yan and Lingpeng Kong and Yiran Zhong},
  booktitle={The Tenth International Conference on Learning Representations},
  year={2022},
}

@inproceedings{
meng2025polaformer,
title={PolaFormer: Polarity-aware Linear Attention for Vision Transformers},
author={Weikang Meng and Yadan Luo and Xin Li and Dongmei Jiang and Zheng Zhang},
booktitle={The Thirteenth International Conference on Learning Representations},
year={2025},
}

@article{Duan2024VisionRWKVEA,
  title={Vision-RWKV: Efficient and Scalable Visual Perception with RWKV-Like Architectures},
  author={Yuchen Duan and Weiyun Wang and Zhe Chen and Xizhou Zhu and Lewei Lu and Tong Lu and Yu Qiao and Hongsheng Li and Jifeng Dai and Wenhai Wang},
  booktitle={The Thirteenth International Conference on Learning Representations},
  year={2025},
}

@article{Kim2024LearningCS,
  title={Learning Correlation Structures for Vision Transformers},
  author={Manjin Kim and Paul Hongsuck Seo and Cordelia Schmid and Minsu Cho},
  journal={2024 IEEE/CVF Conference on Computer Vision and Pattern Recognition (CVPR)},
  year={2024},
  pages={18941-18951},
}

@inproceedings{Katharopoulos2020TransformersAR,
  title={Transformers are RNNs: Fast Autoregressive Transformers with Linear Attention},
  author={Angelos Katharopoulos and Apoorv Vyas and Nikolaos Pappas and Franccois Fleuret},
  booktitle={International Conference on Machine Learning},
  year={2020},
}

@inproceedings{Hassani2024FasterNA,
 author = {Hassani, Ali and Hwu, Wen-mei and Shi, Humphrey},
 booktitle = {Advances in Neural Information Processing Systems},
 editor = {A. Globerson and L. Mackey and D. Belgrave and A. Fan and U. Paquet and J. Tomczak and C. Zhang},
 pages = {64717--64734},
 title = {Faster Neighborhood Attention: Reducing the $O(n^2)$ Cost of Self Attention at the Threadblock Level},
 volume = {37},
 year = {2024}
}

@article{Su2024ScanFormerRE,
  title={ScanFormer: Referring Expression Comprehension by Iteratively Scanning},
  author={Wei Su and Peihan Miao and Huanzhang Dou and Xi Li},
  journal={2024 IEEE/CVF Conference on Computer Vision and Pattern Recognition (CVPR)},
  year={2024},
  pages={13449-13458},
}

@inproceedings{Hou2024ProTransformerRT,
title={ProTransformer: Robustify Transformers via Plug-and-Play Paradigm},
author={Zhichao Hou and Weizhi Gao and Yuchen Shen and Xiaorui Liu},
booktitle={ICLR 2024 Workshop on Reliable and Responsible Foundation Models},
year={2024},
}

@InProceedings{Yu2024EmbeddingFreeTW,
author="Yu, Hyunwoo
and Cho, Yubin
and Kang, Beoungwoo
and Moon, Seunghun
and Kong, Kyeongbo
and Kang, Suk-Ju",
editor="Leonardis, Ale{\v{s}}
and Ricci, Elisa
and Roth, Stefan
and Russakovsky, Olga
and Sattler, Torsten
and Varol, G{\"u}l",
title="Embedding-Free Transformer with Inference Spatial Reduction for Efficient Semantic Segmentation",
booktitle="Computer Vision -- ECCV 2024",
year="2025",
pages="92--110",

}

@article{Jiang2024RBSFormerET,
  title={RBSFormer: Enhanced Transformer Network for Raw Image Super-Resolution},
  author={Siyuan Jiang and Senyan Xu and Xingfu Wang},
  journal={2024 IEEE/CVF Conference on Computer Vision and Pattern Recognition Workshops (CVPRW)},
  year={2024},
  pages={6479-6488},
}

@inproceedings{Keles2022OnTC,
  title={On The Computational Complexity of Self-Attention},
  author={Feyza Duman Keles and Pruthuvi Maheshakya Wijewardena and Chinmay Hegde},
  booktitle={International Conference on Algorithmic Learning Theory},
  year={2022},
}

@inproceedings{Vaswani2017AttentionIA,
  title={Attention is All you Need},
  author={Ashish Vaswani and Noam M. Shazeer and Niki Parmar and Jakob Uszkoreit and Llion Jones and Aidan N. Gomez and Lukasz Kaiser and Illia Polosukhin},
  booktitle={Neural Information Processing Systems},
  year={2017},
}

@inproceedings{Dosovitskiy2020AnII,
  author       = {Alexey Dosovitskiy and
                  Lucas Beyer and
                  Alexander Kolesnikov and
                  Dirk Weissenborn and
                  Xiaohua Zhai and
                  Thomas Unterthiner and
                  Mostafa Dehghani and
                  Matthias Minderer and
                  Georg Heigold and
                  Sylvain Gelly and
                  Jakob Uszkoreit and
                  Neil Houlsby},
  title        = {An Image is Worth 16x16 Words: Transformers for Image Recognition
                  at Scale},
  booktitle    = {9th International Conference on Learning Representations, {ICLR} 2021,
                  Virtual Event, Austria, May 3-7, 2021},
  year         = {2021},
}

@article{Wang2021PyramidVT,
  title={Pyramid Vision Transformer: A Versatile Backbone for Dense Prediction without Convolutions},
  author={Wenhai Wang and Enze Xie and Xiang Li and Deng-Ping Fan and Kaitao Song and Ding Liang and Tong Lu and Ping Luo and Ling Shao},
  journal={2021 IEEE/CVF International Conference on Computer Vision (ICCV)},
  year={2021},
  pages={548-558},
}

@article{Wang2021PVTVI,
  title={PVT v2: Improved baselines with Pyramid Vision Transformer},
  author={Wenhai Wang and Enze Xie and Xiang Li and Deng-Ping Fan and Kaitao Song and Ding Liang and Tong Lu and Ping Luo and Ling Shao},
  journal={Computational Visual Media},
  year={2021},
  volume={8},
  pages={415 - 424},
}

@inproceedings{Touvron2020TrainingDI,
  title={Training data-efficient image transformers \& distillation through attention},
  author={Hugo Touvron and Matthieu Cord and Matthijs Douze and Francisco Massa and Alexandre Sablayrolles and Herv'e J'egou},
  booktitle={International Conference on Machine Learning},
  year={2020},
}

@article{Touvron2021GoingDW,
  title={Going deeper with Image Transformers},
  author={Hugo Touvron and Matthieu Cord and Alexandre Sablayrolles and Gabriel Synnaeve and Herv'e J'egou},
  journal={2021 IEEE/CVF International Conference on Computer Vision (ICCV)},
  year={2021},
  pages={32-42},
}

@article{Liu2021SwinTH,
  title={Swin Transformer: Hierarchical Vision Transformer using Shifted Windows},
  author={Ze Liu and Yutong Lin and Yue Cao and Han Hu and Yixuan Wei and Zheng Zhang and Stephen Lin and Baining Guo},
  journal={2021 IEEE/CVF International Conference on Computer Vision (ICCV)},
  year={2021},
  pages={9992-10002},
}

@article{Liu2021SwinTV,
  title={Swin Transformer V2: Scaling Up Capacity and Resolution},
  author={Ze Liu and Han Hu and Yutong Lin and Zhuliang Yao and Zhenda Xie and Yixuan Wei and Jia Ning and Yue Cao and Zheng Zhang and Li Dong and Furu Wei and Baining Guo},
  journal={2022 IEEE/CVF Conference on Computer Vision and Pattern Recognition (CVPR)},
  year={2021},
  pages={11999-12009},
}

@inproceedings{Touvron2022DeiTIR,
  title={DeiT III: Revenge of the ViT},
  author={Hugo Touvron and Matthieu Cord and Herv'e J'egou},
  booktitle={European Conference on Computer Vision},
  year={2022},
}

@inproceedings{Zhu2020DeformableDD,
  author       = {Xizhou Zhu and
                  Weijie Su and
                  Lewei Lu and
                  Bin Li and
                  Xiaogang Wang and
                  Jifeng Dai},
  title        = {Deformable {DETR:} Deformable Transformers for End-to-End Object Detection},
  booktitle    = {9th International Conference on Learning Representations, {ICLR} 2021,
                  Virtual Event, Austria, May 3-7, 2021},
  year         = {2021},
}

@inproceedings{
Zhang2022DINODW,
title={{DINO}: {DETR} with Improved DeNoising Anchor Boxes for End-to-End Object Detection},
author={Hao Zhang and Feng Li and Shilong Liu and Lei Zhang and Hang Su and Jun Zhu and Lionel Ni and Heung-Yeung Shum},
booktitle={The Eleventh International Conference on Learning Representations },
year={2023},
}

@article{Zheng2020RethinkingSS,
  title={Rethinking Semantic Segmentation from a Sequence-to-Sequence Perspective with Transformers},
  author={Sixiao Zheng and Jiachen Lu and Hengshuang Zhao and Xiatian Zhu and Zekun Luo and Yabiao Wang and Yanwei Fu and Jianfeng Feng and Tao Xiang and Philip H. S. Torr and Li Zhang},
  journal={2021 IEEE/CVF Conference on Computer Vision and Pattern Recognition (CVPR)},
  year={2020},
  pages={6877-6886},
}

@inproceedings{Xie2021SegFormerSA,
  title={SegFormer: Simple and Efficient Design for Semantic Segmentation with Transformers},
  author={Enze Xie and Wenhai Wang and Zhiding Yu and Anima Anandkumar and Jos{\'e} Manuel {\'A}lvarez and Ping Luo},
  booktitle={Neural Information Processing Systems},
  year={2021},
}

@inproceedings{Cheng2021PerPixelCI,
  title={Per-Pixel Classification is Not All You Need for Semantic Segmentation},
  author={Bowen Cheng and Alexander G. Schwing and Alexander Kirillov},
  booktitle={Neural Information Processing Systems},
  year={2021},
}

@article{Lu2021SOFTST,
  title={Soft: Softmax-free transformer with linear complexity},
  author={Lu, Jiachen and Yao, Jinghan and Zhang, Junge and Zhu, Xiatian and Xu, Hang and Gao, Weiguo and Xu, Chunjing and Xiang, Tao and Zhang, Li},
  journal={Advances in Neural Information Processing Systems},
  volume={34},
  pages={21297--21309},
  year={2021}
}

@inproceedings{Han2023AgentAO,
  title={Agent attention: On the integration of softmax and linear attention},
  author={Han, Dongchen and Ye, Tianzhu and Han, Yizeng and Xia, Zhuofan and Pan, Siyuan and Wan, Pengfei and Song, Shiji and Huang, Gao},
  booktitle={European Conference on Computer Vision},
  pages={124--140},
  year={2024},
  organization={Springer}
}

@inproceedings{han2024inline,
  title={Bridging the Divide: Reconsidering Softmax and Linear Attention
},
  author={Han, Dongchen and Pu, Yifan and Xia, Zhuofan and Han, Yizeng and Pan, Xuran and Li, Xiu and Lu, Jiwen and Song, Shiji and Huang, Gao},
  booktitle={NeurIPS},
  year={2024},
}

@inproceedings{Williams2000UsingTN,
  title={Using the Nystr{\"o}m Method to Speed Up Kernel Machines},
  author={Christopher K. I. Williams and Matthias W. Seeger},
  booktitle={Neural Information Processing Systems},
  year={2000},
}

@article{Deng2009ImageNetAL,
  title={ImageNet: A large-scale hierarchical image database},
  author={Jia Deng and Wei Dong and Richard Socher and Li-Jia Li and K. Li and Li Fei-Fei},
  journal={2009 IEEE Conference on Computer Vision and Pattern Recognition},
  year={2009},
  pages={248-255},
}

@inproceedings{
choromanski2021rethinking,
title={Rethinking Attention with Performers},
author={Krzysztof Marcin Choromanski and Valerii Likhosherstov and David Dohan and Xingyou Song and Andreea Gane and Tamas Sarlos and Peter Hawkins and Jared Quincy Davis and Afroz Mohiuddin and Lukasz Kaiser and David Benjamin Belanger and Lucy J Colwell and Adrian Weller},
booktitle={International Conference on Learning Representations},
year={2021},
}

@article{Xiong2021NystrmformerAN,
  title={Nystr{\"o}mformer: A Nystr{\"o}m-Based Algorithm for Approximating Self-Attention},
  author={Yunyang Xiong and Zhanpeng Zeng and Rudrasis Chakraborty and Mingxing Tan and Glenn Moo Fung and Yin Li and Vikas Singh},
  journal={Proceedings of the ... AAAI Conference on Artificial Intelligence. AAAI Conference on Artificial Intelligence},
  year={2021},
  volume={35 16},
  pages={
          14138-14148
        },
}

@article{paszke2017automatic,
  title={Automatic differentiation in PyTorch},
  author={Paszke, Adam and Gross, Sam and Chintala, Soumith and Chanan, Gregory and Yang, Edward and DeVito, Zachary and Lin, Zeming and Desmaison, Alban and Antiga, Luca and Lerer, Adam},
  year={2017}
}

@article{SU2024127063,
title = {RoFormer: Enhanced transformer with Rotary Position Embedding},
journal = {Neurocomputing},
volume = {568},
pages = {127063},
year = {2024},
issn = {0925-2312},
doi = {https://doi.org/10.1016/j.neucom.2023.127063},
author = {Jianlin Su and Murtadha Ahmed and Yu Lu and Shengfeng Pan and Wen Bo and Yunfeng Liu}
}

@article{Han2023FLattenTV,
  title={FLatten Transformer: Vision Transformer using Focused Linear Attention},
  author={Dongchen Han and Xuran Pan and Yizeng Han and Shiji Song and Gao Huang},
  journal={2023 IEEE/CVF International Conference on Computer Vision (ICCV)},
  year={2023},
  pages={5938-5948},
}

@inproceedings{Guo2024SLABET,
  title={SLAB: Efficient Transformers with Simplified Linear Attention and Progressive Re-parameterized Batch Normalization},
  author={Guo, Jialong and Chen, Xinghao and Tang, Yehui  and Wang, Yunhe},
  booktitle={International Conference on Machine Learning},
  year={2024}
}

@article{Lu2024,
  author       = {Jiachen Lu and Junge Zhang and Xiatian Zhu and Jianfeng Feng and Tao Xiang and Li Zhang},
  title        = {Softmax-Free Linear Transformers},
  journal      = {International Journal of Computer Vision},
  volume       = {132},
  number       = {8},
  pages        = {3355--3374},
  year         = {2024},
  month        = aug,
  doi          = {10.1007/s11263-024-02035-5},
  issn         = {1573-1405}
}

@article{He2017MaskR,
  title={Mask R-CNN},
  author={Kaiming He and Georgia Gkioxari and Piotr Doll{\'a}r and Ross B. Girshick},
  journal={2017 IEEE International Conference on Computer Vision (ICCV)},
  year={2017},
  pages={2980-2988},

}

@article{Lin2017FocalLF,
  title={Focal Loss for Dense Object Detection},
  author={Tsung-Yi Lin and Priya Goyal and Ross B. Girshick and Kaiming He and Piotr Doll{\'a}r},
  journal={2017 IEEE International Conference on Computer Vision (ICCV)},
  year={2017},
  pages={2999-3007},
}

@inproceedings{Chen2021SkyformerRS,
  title={Skyformer: Remodel Self-Attention with Gaussian Kernel and Nystr{\"o}m Method},
  author={Yifan Chen and Qi Zeng and Heng Ji and Yun Yang},
  booktitle={Neural Information Processing Systems},
  year={2021},
}

@article{Bui2025RevisitingKA,
  title={Revisiting Kernel Attention with Correlated Gaussian Process Representation},
  author={Long Minh Bui and Tho Tran Huu and Duy Dinh and Tan Minh Nguyen and Trong Nghia Hoang},
  journal={ArXiv},
  year={2025},
  volume={abs/2502.20525},
}

@article{Kashiwagi2021GaussianKS,
  title={Gaussian Kernelized Self-Attention for Long Sequence Data and its Application to CTC-Based Speech Recognition},
  author={Yosuke Kashiwagi and Emiru Tsunoo and Shinji Watanabe},
  journal={ICASSP 2021 - 2021 IEEE International Conference on Acoustics, Speech and Signal Processing (ICASSP)},
  year={2021},
  pages={6214-6218},
}

@inproceedings{han2024demystify,
  title={Demystify Mamba in Vision: A Linear Attention Perspective},
  author={Han, Dongchen and Wang, Ziyi and Xia, Zhuofan and Han, Yizeng and Pu, Yifan and Ge, Chunjiang and Song, Jun and Song, Shiji and Zheng, Bo and Huang, Gao},
  booktitle={NeurIPS},
  year={2024},
}

@inproceedings{huang2023stvit,
  title     = {Vision Transformer with Super Token Sampling},
  author    = {Huang, Huaibo and Zhou, Xiaoqiang and Cao, Jie and He, Ran and Tan, Tieniu},
  booktitle = {Proceedings of the IEEE/CVF Conference on Computer Vision and Pattern Recognition (CVPR)},
  year      = {2023},
  pages     = {12703--12712}
}

@Article{zhu2023biformer,
  author  = {Lei Zhu and Xinjiang Wang and Zhanghan Ke and Wayne Zhang and Rynson Lau},
  title   = {BiFormer: Vision Transformer with Bi-Level Routing Attention},
  journal = {Proceedings of the IEEE/CVF Conference on Computer Vision and Pattern Recognition (CVPR)},
  year    = {2023},
}

@inproceedings{iclr2024MogaNet,
  title={MogaNet: Multi-order Gated Aggregation Network},
  author={Siyuan Li and Zedong Wang and Zicheng Liu and Cheng Tan and Haitao Lin and Di Wu and Zhiyuan Chen and Jiangbin Zheng and Stan Z. Li},
  booktitle={International Conference on Learning Representations},
  year={2024}
}

@inproceedings{hassani2023neighborhood,
	title        = {Neighborhood Attention Transformer},
	author       = {Ali Hassani and Steven Walton and Jiachen Li and Shen Li and Humphrey Shi},
	booktitle    = {Proceedings of the IEEE/CVF Conference on Computer Vision and Pattern Recognition (CVPR)},
	month        = {June},
	year         = {2023},
	pages        = {6185-6194}
}

@article{Zhang2021DeepLL,
  title={Deep Long-Tailed Learning: A Survey},
  author={Yifan Zhang and Bingyi Kang and Bryan Hooi and Shuicheng Yan and Jiashi Feng},
  journal={IEEE Transactions on Pattern Analysis and Machine Intelligence},
  year={2021},
  volume={45},
  pages={10795-10816},
}
\bibliographystyle{iclr2026_conference}

\clearpage
\appendix
\section{Gradient Behavior of Gaussian vs. Laplacian Kernels}

\begin{proposition}[Gradient Magnitude Decay in High Dimensions]
Let \( \mathbf{x}, \mathbf{y} \in \mathbb{R}^d \). As the distance \( \|\mathbf{x} - \mathbf{y}\| \to 0 \), the gradient magnitude of the Gaussian kernel decays linearly to zero. In contrast, the gradient magnitude of the Laplacian kernel remains at a constant non-zero order proportional to \( \sqrt{d} \).
\end{proposition}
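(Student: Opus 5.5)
The plan is to compute both gradients explicitly, using the coordinate-wise partial derivatives already written in the introduction, and then take Euclidean norms and let $\mathbf{x}\to\mathbf{y}$.

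\textbf{Gaussian case.} For $k_G(\mathbf{x},\mathbf{y})=\exp(-\|\mathbf{x}-\mathbf{y}\|_2^2/2\sigma^2)$ the gradient is
\[
\nabla_{\mathbf{x}} k_G=-\frac{1}{\sigma^2}(\mathbf{x}-\mathbf{y})\,k_G(\mathbf{x},\mathbf{y}).
\]
Its norm is therefore
\[
\|\nabla_{\mathbf{x}} k_G\|_2=\frac{\|\mathbf{x}-\mathbf{y}\|_2}{\sigma^2}\exp\!\left(-\frac{\|\mathbf{x}-\mathbf{y}\|_2^2}{2\sigma^2}\right).
\]
The exponential factor tends to $1$ as $\mathbf{x}\to\mathbf{y}$, so
\[
\|\nabla_{\mathbf{x}} k_G\|_2=\frac{\|\mathbf{x}-\mathbf{y}\|_2}{\sigma^2}\bigl(1+O(\|\mathbf{x}-\mathbf{y}\|_2^2)\bigr).
\]
This is exactly linear decay to zero. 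The rate is independent of $d$ when measured in $\|\cdot\|_2$; if $\|\cdot\|_1$ is used instead, norm equivalence changes only the constants.

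\textbf{Laplacian case.} For $k_L=\exp(-\|\mathbf{x}-\mathbf{y}\|_1/\lambda)$, on the open set where every coordinate satisfies $x_i\neq y_i$ the gradient is
\[
\nabla_{\mathbf{x}}k_L=-\frac{1}{\lambda}\,\mathrm{sign}(\mathbf{x}-\mathbf{y})\,k_L .
\]
The sign vector has entries in $\{\pm1\}$, so its Euclidean norm is exactly $\sqrt d$. Hence
\[
\|\nabla_{\mathbf{x}}k_L\|_2=\frac{\sqrt d}{\lambda}\,\exp\!\left(-\frac{\|\mathbf{x}-\mathbf{y}\|_1}{\lambda}\right)\;\longrightarrow\;\frac{\sqrt d}{\lambda}
\]
as $\mathbf{x}\to\mathbf{y}$ along any path that stays in this generic region. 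More generally, if the path has $s$ nonzero coordinates of $\mathbf{x}-\mathbf{y}$, the limit is $\sqrt{s}/\lambda$.

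\textbf{Main obstacle: non-differentiability.} The $\ell_1$ norm is not differentiable on the hyperplanes $x_i=y_i$, so "the gradient" is undefined at $\mathbf{x}=\mathbf{y}$. I would handle this in two ways.
\begin{itemize}
\item State the result as a limit along approach directions $\mathbf{x}=\mathbf{y}+t\mathbf{u}$, $t\downarrow0$, with $\mathbf{u}$ having no zero coordinates. This is a set of full measure, so the limit $\sqrt d/\lambda$ holds for almost every direction.
\item Alternatively, phrase the claim through the Clarke subdifferential. At $\mathbf{x}=\mathbf{y}$ this is $\frac1\lambda[-1,1]^d$, whose extreme points all have norm $\sqrt d/\lambda$. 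This shows the limiting gradients form a nondegenerate set of order $\sqrt d$, in contrast to the Gaussian subdifferential $\{0\}$.
\end{itemize}
I would also remark that the same computation gives the stronger lower bound $\|\nabla_{\mathbf{x}}k_L\|_2\ge \sqrt{s}\,e^{-\|\mathbf{x}-\mathbf{y}\|_1/\lambda}/\lambda$ wherever $s$ coordinates of $\mathbf{x}-\mathbf{y}$ are nonzero. The gradient therefore stays bounded away from zero near coincidence, which is what the proposition asserts.
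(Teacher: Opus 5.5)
Your proposal is correct and matches the paper's proof step for step. Like the paper, you compute both gradients explicitly, take $\ell_2$ norms, and let $\mathbf{x}\to\mathbf{y}$, restricting the Laplacian case to the region where every coordinate of $\mathbf{x}-\mathbf{y}$ is nonzero to obtain $\sqrt{d}/\lambda$. Your extra remarks on non-differentiability go beyond the paper, but one needs adjusting: when only $s<d$ coordinates are nonzero, $\nabla_{\mathbf{x}}k_L$ does not exist, so the $\sqrt{s}/\lambda$ limit and your lower bound should be stated for elements of the Clarke subdifferential, whose minimal-norm element has norm exactly $\sqrt{s}\,e^{-\|\mathbf{x}-\mathbf{y}\|_1/\lambda}/\lambda$.
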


\begin{proof}
Let \( \mathbf{t} = \mathbf{x} - \mathbf{y} \in \mathbb{R}^d \). We analyze the \( \ell_2 \) norm of the gradient with respect to \( \mathbf{x} \), denoted as \( \|\nabla_{\mathbf{x}} k(\mathbf{x}, \mathbf{y})\|_2 \).

\textbf{1. Gaussian kernel} \\
The multi-dimensional Gaussian kernel is defined as:
\[
k_{\text{Gauss}}(\mathbf{x}, \mathbf{y}) = \exp\left(-\frac{\|\mathbf{t}\|_2^2}{2\sigma^2}\right)
\]
The gradient vector with respect to \( \mathbf{x} \) is:
\[
\nabla_{\mathbf{x}} k_{\text{Gauss}} = -\frac{\mathbf{t}}{\sigma^2} \exp\left(-\frac{\|\mathbf{t}\|_2^2}{2\sigma^2}\right)
\]
Taking the \( \ell_2 \) norm of the gradient yields:
\[
\|\nabla_{\mathbf{x}} k_{\text{Gauss}}\|_2 = \frac{\|\mathbf{t}\|_2}{\sigma^2} \exp\left(-\frac{\|\mathbf{t}\|_2^2}{2\sigma^2}\right)
\]
As \( \|\mathbf{t}\|_2 \to 0 \), the exponential term approaches \( 1 \), resulting in:
\[
\|\nabla_{\mathbf{x}} k_{\text{Gauss}}\|_2 \sim \frac{\|\mathbf{t}\|_2}{\sigma^2}
\]
This confirms that the gradient magnitude of the Gaussian kernel diminishes \textbf{linearly} to zero as \( \mathbf{x} \to \mathbf{y} \).

\textbf{2. Laplacian kernel} \\
The multi-dimensional Laplacian kernel is defined as:
\[
k_{\text{Laplace}}(\mathbf{x}, \mathbf{y}) = \exp\left(-\frac{\|\mathbf{t}\|_1}{\lambda}\right)
\]
For \( t_i \ne 0 \) (\( \forall i \in \{1, \dots, d\} \)), the gradient vector is:
\[
\nabla_{\mathbf{x}} k_{\text{Laplace}} = -\frac{1}{\lambda} \text{sign}(\mathbf{t}) \exp\left(-\frac{\|\mathbf{t}\|_1}{\lambda}\right)
\]
where \( \text{sign}(\mathbf{t}) \in \{-1, 1\}^d \) is applied element-wise. The \( \ell_2 \) norm of this sign vector is \( \|\text{sign}(\mathbf{t})\|_2 = \sqrt{d} \). Taking the \( \ell_2 \) norm of the gradient yields:
\[
\|\nabla_{\mathbf{x}} k_{\text{Laplace}}\|_2 = \frac{\sqrt{d}}{\lambda} \exp\left(-\frac{\|\mathbf{t}\|_1}{\lambda}\right)
\]
As \( \mathbf{t} \to \mathbf{0} \), the exponential term approaches \( 1 \), resulting in:
\[
\|\nabla_{\mathbf{x}} k_{\text{Laplace}}\|_2 \sim \frac{\sqrt{d}}{\lambda}
\]
Thus, in a \( d \)-dimensional space, the gradient magnitude of the Laplacian kernel is bounded away from zero and remains at a \textbf{constant order} \( \sqrt{d}/\lambda \). 
\end{proof}

\section{Mathematical Proof}

This section offers mathematical proofs for the propositions outlined in the main paper.

\begin{proposition}[Injectivity of Laplacian-based Kernel Embedding]
Let $\mathbf{q}_i, \mathbf{q}_j \in \mathbb{R}^d$, and define their kernel similarity vectors as
\[
\mathbf{g}_i = \left[k(\mathbf{q}_i, \mathbf{k}_1), \ldots, k(\mathbf{q}_i, \mathbf{k}_N)\right]^\top,
\]
where $k(\mathbf{q}, \mathbf{k}) = \exp\left(-\frac{\|\mathbf{q} - \mathbf{k}\|_1}{\lambda}\right)$ is the Laplacian kernel, and $\{\mathbf{k}_1, \ldots, \mathbf{k}_N\} \subset \mathbb{R}^d$ is a fixed set of anchor points. Define the normalized feature mapping
\[
\mathbf{z}_i = \boldsymbol{\Sigma}^{-1/2} \left( \mathbf{g}_i - \frac{1}{N} \sum_{k=1}^N g_i^k \cdot \mathbf{1} \right) + \frac{1}{N} \mathbf{1},
\]
where $\boldsymbol{\Sigma}$ is the empirical covariance matrix over $\{\mathbf{g}_i\}$. Suppose the key set is sufficiently rich such that $\mathbf{q}_i \neq \mathbf{q}_j \Rightarrow \mathbf{g}_i \neq \mathbf{g}_j$. Assume further that for any $\mathbf{q}_i \neq \mathbf{q}_j$ and $c \in \mathbb{R}$, $\mathbf{g}_i - \mathbf{g}_j \neq c\mathbf{1}$ (no constant-vector degeneracy). Then the mapping $\mathbf{q}_i \mapsto \mathbf{z}_i \in \mathbb{R}^N$ is injective.
\end{proposition}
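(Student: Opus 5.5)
The map $\mathbf{q}_i \mapsto \mathbf{z}_i$ factors as
\[
\mathbf{q}_i \;\mapsto\; \mathbf{g}_i \;\mapsto\; \mathbf{g}_i - \bar g_i \mathbf{1} \;\mapsto\; \boldsymbol{\Sigma}^{-1/2}(\mathbf{g}_i - \bar g_i\mathbf{1}) \;\mapsto\; \mathbf{z}_i ,
\qquad \bar g_i = \tfrac{1}{N}\sum_{k} g_i^k .
\]
A composition of injective maps is injective, so it suffices to check each stage. Equivalently, fix $\mathbf{q}_i \neq \mathbf{q}_j$ and show $\mathbf{z}_i \neq \mathbf{z}_j$ by propagating the difference through the four stages.

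\textbf{Stage 1.} The first arrow is injective by the richness hypothesis. This gives $\mathbf{g}_i \neq \mathbf{g}_j$.

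\textbf{Stage 2 (centering).} The centering map is $\mathbf{g} \mapsto \mathbf{P}\mathbf{g}$ with $\mathbf{P} = \mathbf{I} - \frac1N \mathbf{1}\mathbf{1}^\top$, the projector already introduced for $\boldsymbol{\Sigma}_{\text{key}}$. Its kernel is $\mathrm{span}(\mathbf{1})$, so it is not injective on all of $\mathbb{R}^N$. On our set of vectors, however, $\mathbf{P}\mathbf{g}_i = \mathbf{P}\mathbf{g}_j$ holds iff $\mathbf{g}_i - \mathbf{g}_j = c\mathbf{1}$ with $c = \bar g_i - \bar g_j$. This is exactly what the no-constant-vector-degeneracy assumption excludes. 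Hence $\mathbf{P}(\mathbf{g}_i - \mathbf{g}_j) \neq \mathbf{0}$.

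\textbf{Stage 3 (whitening).} Here I need $\boldsymbol{\Sigma}^{-1/2}$ to be a well-defined invertible linear map. I would take this as implicit in the statement's use of $\boldsymbol{\Sigma}^{-1/2}$, i.e.\ $\boldsymbol{\Sigma}$ is symmetric positive definite. Then $\boldsymbol{\Sigma}^{-1/2}$ is symmetric positive definite and has trivial kernel, so $\boldsymbol{\Sigma}^{-1/2}\mathbf{P}(\mathbf{g}_i-\mathbf{g}_j) \neq \mathbf{0}$.

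\textbf{Stage 4.} Adding the fixed vector $\frac1N\mathbf{1}$ is a translation, hence a bijection. Also, $\boldsymbol{\Sigma}$ depends only on the collection $\{\mathbf{g}_i\}$ and not on which query is being mapped, so the same linear map is applied to both points. Therefore $\mathbf{z}_i - \mathbf{z}_j = \boldsymbol{\Sigma}^{-1/2}\mathbf{P}(\mathbf{g}_i - \mathbf{g}_j) \neq \mathbf{0}$.

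\textbf{Main obstacle.} The delicate step is Stage 3. The empirical covariance of centered vectors is typically singular: every $\mathbf{P}\mathbf{g}_i$ lies in $\mathbf{1}^\perp$, and $\boldsymbol{\Sigma}$ has rank at most the number of samples minus one. If $\boldsymbol{\Sigma}$ is only positive semidefinite, I would instead interpret $\boldsymbol{\Sigma}^{-1/2}$ as the regularized inverse $(\boldsymbol{\Sigma}+\varepsilon\mathbf{I})^{-1/2}$, matching the $\varepsilon$ used in the diagonal estimator $\mathbf{D}^{-1/2}$. That matrix is positive definite, so the argument goes through unchanged.

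Alternatively, one could use the pseudo-inverse square root. In that case I would need to argue that $\mathbf{P}(\mathbf{g}_i - \mathbf{g}_j)$ lies in the range of $\boldsymbol{\Sigma}$. This holds because both $\mathbf{P}\mathbf{g}_i$ and $\mathbf{P}\mathbf{g}_j$ are among the vectors defining the empirical covariance, so their difference lies in the span of the centered samples, which is the range of $\boldsymbol{\Sigma}$. The pseudo-inverse square root is injective on that range.

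I would state the positive-definite (or regularized) reading explicitly as the interpretation of the hypothesis before running the composition argument.
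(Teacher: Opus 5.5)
Your argument is correct and is essentially the paper's proof run forwards instead of by contradiction. The paper likewise assumes $\boldsymbol{\Sigma}$ strictly positive definite, removes $\frac{1}{N}\mathbf{1}$ and $\boldsymbol{\Sigma}^{-1/2}$ to reach $\mathbf{g}_i-\mathbf{g}_j=c\mathbf{1}$, and then excludes $c=0$ by the richness hypothesis and $c\neq 0$ by the no-constant-vector hypothesis, which are exactly your Stages 1--4.

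One refinement to your pseudo-inverse aside: you tacitly take $\boldsymbol{\Sigma}$ to be the covariance of the centered vectors $\mathbf{P}\mathbf{g}_k$, whereas the statement defines it over the raw $\mathbf{g}_k$. Under that literal reading $\mathbf{P}(\mathbf{g}_i-\mathbf{g}_j)$ need not lie in $\mathrm{range}(\boldsymbol{\Sigma})$. The conclusion survives anyway, since $\mathbf{u}=\mathbf{g}_i-\mathbf{g}_j\in\mathrm{range}(\boldsymbol{\Sigma})$ and $\langle\mathbf{P}\mathbf{u},\mathbf{u}\rangle=\|\mathbf{P}\mathbf{u}\|_2^2>0$ force $\mathbf{P}\mathbf{u}\notin\ker\boldsymbol{\Sigma}=\ker(\boldsymbol{\Sigma}^{\dagger})^{1/2}$.
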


\begin{proof}
We proceed by contradiction. Assume that $\mathbf{q}_i \neq \mathbf{q}_j$ but $\mathbf{z}_i = \mathbf{z}_j$.

For notational convenience, let $\mu_i = \frac{1}{N} \sum_{k=1}^N g_i^k$ denote the mean of the components of $\mathbf{g}_i$. We can then rewrite the normalized feature mapping as:
\begin{equation*}
    \mathbf{z}_i = \boldsymbol{\Sigma}^{-1/2}(\mathbf{g}_i - \mu_i\mathbf{1}) + \frac{1}{N}\mathbf{1}.
\end{equation*}

Expanding the assumed equality $\mathbf{z}_i = \mathbf{z}_j$, we obtain:
\begin{equation*}
    \boldsymbol{\Sigma}^{-1/2}(\mathbf{g}_i - \mu_i\mathbf{1}) + \frac{1}{N}\mathbf{1} = \boldsymbol{\Sigma}^{-1/2}(\mathbf{g}_j - \mu_j\mathbf{1}) + \frac{1}{N}\mathbf{1}.
\end{equation*}

Subtracting the constant vector $\frac{1}{N}\mathbf{1}$ from both sides yields:
\begin{equation*}
    \boldsymbol{\Sigma}^{-1/2}(\mathbf{g}_i - \mu_i\mathbf{1}) = \boldsymbol{\Sigma}^{-1/2}(\mathbf{g}_j - \mu_j\mathbf{1}).
\end{equation*}

Assuming non-degenerate cases where the covariance matrix $\boldsymbol{\Sigma}$ is strictly positive definite, $\boldsymbol{\Sigma}^{-1/2}$ is invertible. Left-multiplying both sides by $\boldsymbol{\Sigma}^{1/2}$ gives:
\begin{equation*}
    \mathbf{g}_i - \mu_i\mathbf{1} = \mathbf{g}_j - \mu_j\mathbf{1}.
\end{equation*}

Rearranging the terms gives:
\begin{equation*}
    \mathbf{g}_i - \mathbf{g}_j = (\mu_i - \mu_j)\mathbf{1}.
\end{equation*}

Let $c = \mu_i - \mu_j$, where $c \in \mathbb{R}$ is a scalar. Then the equation becomes:
\begin{equation} \label{eq:contradiction_base}
    \mathbf{g}_i - \mathbf{g}_j = c\mathbf{1}.
\end{equation}

We now consider two mutually exclusive cases for the value of $c$:

\textbf{Case 1: $c = 0$.} \\
Equation (\ref{eq:contradiction_base}) implies $\mathbf{g}_i = \mathbf{g}_j$. However, since we assumed $\mathbf{q}_i \neq \mathbf{q}_j$, this directly contradicts the proposition's assumption that $\mathbf{q}_i \neq \mathbf{q}_j \Rightarrow \mathbf{g}_i \neq \mathbf{g}_j$.

\textbf{Case 2: $c \neq 0$.} \\
Equation (\ref{eq:contradiction_base}) implies $\mathbf{g}_i - \mathbf{g}_j = c\mathbf{1}$ for some scalar $c$. Since $\mathbf{q}_i \neq \mathbf{q}_j$, this directly contradicts the assumption of no constant-vector degeneracy.

In both cases, we arrive at a contradiction. Therefore, our initial assumption must be false. The only possibility is that $\mathbf{z}_i = \mathbf{z}_j \Rightarrow \mathbf{q}_i = \mathbf{q}_j$. Thus, the mapping is injective.
\end{proof}

\begin{proposition}[Linear-Time Computation of Laplacian Feature Map via Nyström Approximation]
Let \( \mathbf{q}_i \in \mathbb{R}^d \) be a query and \( \{ \tilde{\mathbf{k}}_1, \dots, \tilde{\mathbf{k}}_m \} \subset \mathbb{R}^d \) be a set of $m$ landmark keys sampled via the Nyström method, where $m \ll N$. Define the Laplacian kernel \( k(\mathbf{q}, \mathbf{k}) = \exp\left(-\frac{\|\mathbf{q} - \mathbf{k}\|_1}{\lambda}\right) \), and let
\[
\tilde{\mathbf{g}}_i = \left[k(\mathbf{q}_i, \tilde{\mathbf{k}}_1), \dots, k(\mathbf{q}_i, \tilde{\mathbf{k}}_m)\right]^\top \in \mathbb{R}^m
\]
be the landmark kernel similarity vector for the $i$-th query. Assuming input dimension \( d \) and the number of landmarks \( m \) are constants with respect to the sequence length $N$, the normalized embedding
\begin{equation}
\mathbf{z}_i = \boldsymbol{\Sigma}^{-1/2} \left( \tilde{\mathbf{g}}_i - \frac{1}{m} \sum_{j=1}^m \tilde{g}_i^j \cdot \mathbf{1} \right) + \frac{1}{m} \mathbf{1}
\label{eq:laplace-feature-map-nystrom}
\end{equation}
can be computed in \( \mathcal{O}(m) \) time and space. Consequently, computing the feature maps for all $N$ queries achieves an overall \( \mathcal{O}(N) \) time and space complexity \emph{at inference time}, provided that \( \boldsymbol{\Sigma}^{-1/2} \) is approximated by an $m \times m$ diagonal whitening matrix estimated offline.
\end{proposition}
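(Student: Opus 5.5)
The argument is a direct operation count, done once for a single query and then summed over all $N$ queries. I will charge the work to each stage of Eq.~\eqref{eq:laplace-feature-map-nystrom} in turn, treating $d$ and $m$ as constants independent of $N$, and I will assume the diagonal whitening matrix $\boldsymbol{\Sigma}^{-1/2}=\mathrm{diag}(s_1,\dots,s_m)$ has already been estimated offline, as the statement allows.

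\textbf{Step 1: the similarity vector.} I first bound the cost of $\tilde{\mathbf{g}}_i$. Each entry $k(\mathbf{q}_i,\tilde{\mathbf{k}}_\ell)$ requires $d$ subtractions, $d$ absolute values, $d-1$ additions for the $\ell_1$ norm, one division by $\lambda$ and one exponential. That is $\mathcal{O}(d)$ per entry, so the whole vector costs $\mathcal{O}(md)=\mathcal{O}(m)$ time and occupies $\mathcal{O}(m)$ space. The landmarks themselves are produced by average pooling (the sampling function $f_s$); this is a single $\mathcal{O}(Nd)$ pass shared by all queries, and storing them takes $\mathcal{O}(md)$ memory.

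\textbf{Step 2: centering, whitening and shifting.} The mean $\frac{1}{m}\sum_j \tilde g_i^j$ costs $\mathcal{O}(m)$. Because $\boldsymbol{\Sigma}^{-1/2}$ is diagonal, applying it is an elementwise product costing $m$ multiplications, rather than the $\mathcal{O}(m^2)$ of a dense matrix--vector product. Adding $\frac{1}{m}\mathbf{1}$ costs another $\mathcal{O}(m)$. So $\mathbf{z}_i$ is obtained in $\mathcal{O}(m)$ time, and only $\mathcal{O}(m)$ working memory is needed beyond the shared landmarks and diagonal.

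\textbf{Step 3: summing over queries.} The per-query computations are independent, so the total time is $N\cdot\mathcal{O}(m)+\mathcal{O}(Nd)=\mathcal{O}(N)$. Storing all $\mathbf{z}_i$ takes $\mathcal{O}(Nm)=\mathcal{O}(N)$ space, and the shared objects add only $\mathcal{O}(md+m)$.

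\textbf{Main obstacle.} The delicate point is the whitening matrix. A full empirical covariance would cost $\mathcal{O}(Nm^2)$ to form and $\mathcal{O}(m^3)$ to invert-square-root. Both are still linear or constant in $N$, but the statement restricts to a diagonal estimate computed offline, and I must say explicitly that this step is excluded from the inference cost, or note that even an online diagonal estimate is $\mathcal{O}(Nm)$. I would present this caveat as a remark, so that the inference-time claim rests only on Steps 1--3.
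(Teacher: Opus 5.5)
Your proposal is correct and follows essentially the same route as the paper: a stage-by-stage operation count (kernel evaluations in $\mathcal{O}(md)=\mathcal{O}(m)$, then centering, diagonal whitening as an elementwise product, and the $\frac{1}{m}\mathbf{1}$ shift, each $\mathcal{O}(m)$), multiplied over the $N$ queries. Your extra bookkeeping goes slightly beyond the paper's proof, which takes the landmarks and the whitening diagonal as given, but it does not change the argument. It covers the $\mathcal{O}(Nd)$ landmark pooling pass, the $\mathcal{O}(Nm)$ storage of all outputs, and the offline-versus-online cost of estimating the diagonal.
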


\begin{proof}
The proof proceeds by analyzing the time and space complexity of computing the sequence of embeddings $\mathbf{z}_1, \dots, \mathbf{z}_N$ at inference time, based on the $m$ sampled landmarks.

\textbf{Step 1: Computation of the landmark similarity vector $\tilde{\mathbf{g}}_i$.} \\
For each landmark key $\tilde{\mathbf{k}}_j$ ($j = 1, \dots, m$), computing the Laplacian kernel element $\tilde{g}_i^j = \exp\left(-\frac{\|\mathbf{q}_i - \tilde{\mathbf{k}}_j\|_1}{\lambda}\right)$ requires:
\begin{enumerate}
    \item Vector subtraction $\mathbf{q}_i - \tilde{\mathbf{k}}_j$ in $\mathbb{R}^d$, which takes $\mathcal{O}(d)$ time.
    \item Computing the $L_1$ norm of the resulting difference vector, taking $\mathcal{O}(d)$ time.
    \item Scalar division by $\lambda$ and exponentiation, taking $\mathcal{O}(1)$ time.
\end{enumerate}
Since the input dimension $d$ is assumed to be constant, computing one element $\tilde{g}_i^j$ is bounded by $\mathcal{O}(d) = \mathcal{O}(1)$. Computing all $m$ elements for the vector $\tilde{\mathbf{g}}_i$ therefore takes $\mathcal{O}(m)$ time. Storing the resulting vector $\tilde{\mathbf{g}}_i$ requires $\mathcal{O}(m)$ space.

\textbf{Step 2: Centering the feature vector.} \\
Let $\mu_i = \frac{1}{m} \sum_{j=1}^m \tilde{g}_i^j$. Summing the $m$ elements of $\tilde{\mathbf{g}}_i$ and dividing by $m$ takes $\mathcal{O}(m)$ time. Subtracting this scalar mean from each component of $\tilde{\mathbf{g}}_i$ to compute the centered vector $\tilde{\mathbf{g}}_i - \mu_i\mathbf{1}$ takes an additional $\mathcal{O}(m)$ time. The space required for the centered vector is $\mathcal{O}(m)$.

\textbf{Step 3: Applying the whitening transformation.} \\
By assumption, the $m \times m$ matrix $\boldsymbol{\Sigma}^{-1/2}$ is approximated by a diagonal matrix computed offline. Let $\mathbf{D} = \text{diag}(d_1, \dots, d_m)$ be this diagonal approximation. The matrix-vector multiplication $\mathbf{D} (\tilde{\mathbf{g}}_i - \mu_i\mathbf{1})$ reduces to an element-wise multiplication (Hadamard product) between the diagonal entries of $\mathbf{D}$ and the centered vector. This element-wise operation takes $\mathcal{O}(m)$ time. Storing the diagonal of $\mathbf{D}$ requires $\mathcal{O}(m)$ space, and the resulting transformed vector also takes $\mathcal{O}(m)$ space. \emph{(Note: Without the diagonal assumption, a full matrix-vector multiplication would require $\mathcal{O}(m^2)$ time)}.

\textbf{Step 4: Adding the final bias term.} \\
Adding the scalar constant $\frac{1}{m}$ to each element of the resulting vector to obtain the final normalized embedding $\mathbf{z}_i \in \mathbb{R}^m$ requires $\mathcal{O}(m)$ time.

\textbf{Conclusion:}\\
For a single query $\mathbf{q}_i$, summing the time complexities yields $\mathcal{O}(m) + \mathcal{O}(m) + \mathcal{O}(m) + \mathcal{O}(m) = \mathcal{O}(m)$ inference time. Extending this operation to the entire sequence of $N$ queries results in a total inference time of $N \times \mathcal{O}(m) = \mathcal{O}(N \cdot m)$. Because the number of Nyström landmarks $m$ is a predefined parameter such that $m \ll N$, $m$ acts as a constant with respect to the sequence length. Thus, the overall computation of the feature maps scales linearly, achieving $\mathcal{O}(N)$ time and space complexity.
\end{proof}

\end{document}